\theoremstyle{plain}
\newtheorem{theorem}{Theorem}[section]
\newtheorem{proposition}[theorem]{Proposition}
\theoremstyle{definition}
\newtheorem{assumption}[theorem]{Assumption}
\theoremstyle{remark}
\icmltitlerunning{UniHash: Unifying Pointwise and Pairwise Hashing Paradigms}
\begin{document}

\twocolumn[
\icmltitle{UniHash: Unifying Pointwise and Pairwise Hashing Paradigms}

\icmlsetsymbol{equal}{*}
\begin{icmlauthorlist}
\icmlauthor{Xiaoxu Ma}{scut,gatech}
\icmlauthor{Runhao Li}{ntu}
\icmlauthor{Xiangbo Zhang}{gatech}
\icmlauthor{Zhenyu Weng}{scut}
\end{icmlauthorlist}

\icmlaffiliation{scut}{South China University of Technology}
\icmlaffiliation{gatech}{Georgia Institute of Technology}
\icmlaffiliation{ntu}{Nanyang Technological University}

\icmlcorrespondingauthor{Zhenyu Weng}{wzytumbler@gmail.com}

\vskip 0.3in

\vskip 0.3in
]

\printAffiliationsAndNotice{}

\begin{abstract}
Effective retrieval across both seen and unseen categories is crucial for modern image retrieval systems. Retrieval on seen categories ensures precise recognition of known classes, while retrieval on unseen categories promotes generalization to novel classes with limited supervision. However, most existing deep hashing methods are confined to a single training paradigm, either pointwise or pairwise, where the former excels on seen categories and the latter generalizes better to unseen ones. To overcome this limitation, we propose Unified Hashing (UniHash), a dual-branch framework that unifies the strengths of both paradigms to achieve balanced retrieval performance across seen and unseen categories. UniHash consists of two complementary branches: a center-based branch following the pointwise paradigm and a pairwise branch following the pairwise paradigm. A novel hash code learning method is introduced to enable bidirectional knowledge transfer between branches, improving hash code discriminability and generalization. It employs a mutual learning loss to align hash representations and introduces a Split-Merge Mixture of Hash Experts (SM-MoH) module to enhance cross-branch exchange of hash representations. Theoretical analysis substantiates the effectiveness of UniHash, and extensive experiments on CIFAR-10, MSCOCO, and ImageNet demonstrate that UniHash consistently achieves state-of-the-art performance in both seen and unseen image retrieval.
\end{abstract}
\section{Introduction}


Hashing has become a prominent solution for large-scale image retrieval due to its efficiency in both computation and storage \citep{li2011hashing,jiang2025online,cao2025deep,liang2024self,kong2024mitigating,wu2023forb}. It maps high-dimensional visual features into compact binary codes while preserving semantic similarity in the Hamming space \citep{cao2018deep,yang2018adversarial,qin2018gph,jegou2008hamming}. Recent deep hashing methods have achieved state-of-the-art performance by jointly optimizing feature extraction and hash code learning in an end-to-end fashion \citep{yuan2020central,hoe2021one,wang2023deep,he2024flexible,liu2016deep,li2015feature,cao2017hashnet,wang2017deep,su2018greedy,fan2020deep}. Despite these advances, real-world retrieval systems often face queries from both seen categories encountered during training and unseen categories that emerge after deployment. Achieving balanced performance across two settings is crucial for developing reliable retrieval systems, a challenge that existing hashing methods have overlooked.


\begin{figure*}[t]
    \centering
    \includegraphics[width=\linewidth]{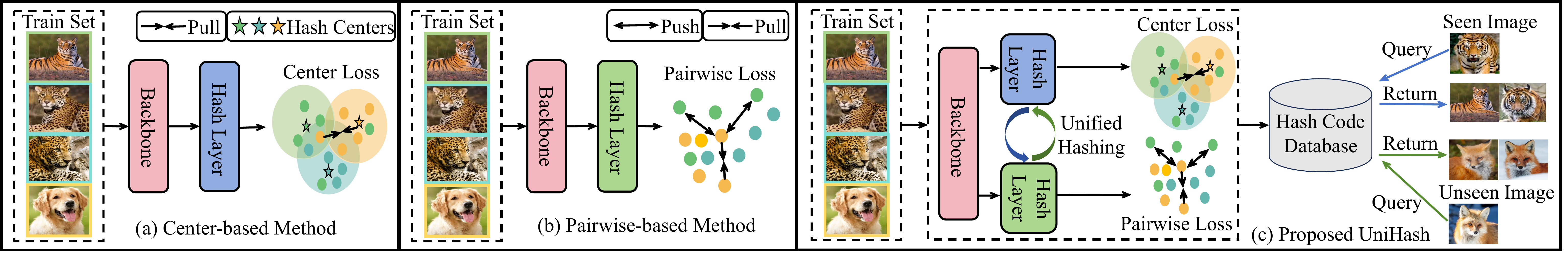}
    \caption{Comparison of existing deep hashing paradigms with the proposed Unified Hashing (UniHash). The figure depicts the evolution from traditional center-based and pairwise hashing to the proposed unified hashing paradigm, which fosters complementary supervision between branches and enhances retrieval performance on both seen and unseen image categories.}
    \label{fig:method_comparison}
\end{figure*}

Deep supervised hashing methods can generally be divided into two paradigms: pointwise and pairwise.
As shown in Fig.~\ref{fig:method_comparison}(a), recent pointwise methods ($e.g.,$ center-based methods)\citep{yuan2020central,hoe2021one,wang2023deep} introduce fixed or learnable hash centers that explicitly represent category prototypes. These methods establish representative hash centers for each category and train the network to align the hash codes of individual samples with their corresponding centers. Such pointwise supervision effectively enforces intra-class compactness \citep{wei2022weakly,xuan2021intra,kobayashi2021t,wang2023improving}, yielding superior retrieval performance on seen categories. However, their strong reliance on predefined centers limits their generalization to unseen categories~\cite{shen2025empirical}. In contrast, as shown in Fig.~\ref{fig:method_comparison}(b), pairwise methods learn relative similarities among samples. Specifically, binary pairwise models \citep{liu2016deep,li2015feature,cao2017hashnet} aim to pull similar pairs closer and push dissimilar pairs apart in the Hamming space. Triplet-based extensions \citep{wang2017deep,deng2018triplet} introduce an anchor–positive–negative structure to capture relative ranking information, while listwise variants \citep{liang2021deep} further optimize global ranking metrics by jointly considering multiple items. This relational supervision flexibly preserves semantic consistency through feature similarity, thereby enhancing generalization to unseen categories.

As existing deep hashing methods are trained under a single supervision paradigm, either pointwise or pairwise, they struggle to achieve balanced image retrieval performance across seen and unseen categories~\cite{shen2025empirical}. To address this challenge, we propose Unified Hashing (UniHash), a dual-branch framework that unifies the strengths of both paradigms to achieve balanced image retrieval performance across seen and unseen categories. As illustrated in Fig.~\ref{fig:method_comparison}(c), UniHash consists of two complementary branches: a center-based branch corresponding to the pointwise paradigm and a pairwise branch corresponding to the pairwise paradigm. To enhance both the discriminability and generalization of hash codes, we introduce a novel hash code learning method that facilitates bidirectional knowledge exchange between two branches. Specifically, a mutual learning loss is employed to align hash code representations between two branches. Inspired by the Mixture of Experts (MoE) framework~\citep{shazeer2017outrageously,chen2023adamv,chen2022towards,riquelme2021scaling}, a Split-Merge Mixture of Hash Experts (SM-MoH) module is designed to promote cross-branch interaction in hash representation learning. Through this design, the center-based branch acquires relational cues from the pairwise branch, while the pairwise branch benefits from the compact semantic structure maintained by the center-based branch. Furthermore, we theoretically show that UniHash alleviates paradigm-specific discrepancies by enforcing the consistency between the two learning branches.

In summary, our main contributions are as follows:
\begin{itemize}[label=$\bullet$]

    \item We propose Unified Hashing (UniHash), a novel dual-branch framework that unifies the pointwise and pairwise training paradigms to integrate their complementary strengths.

    \item We introduce a hash code learning method that promotes cross-branch interaction and alignment of hash representations for more discriminative and generalizable hash codes.

    \item Theoretical analysis validates the effectiveness of UniHash, and extensive experiments demonstrate its state-of-the-art performance across both seen and unseen retrieval settings.

\end{itemize}

\section{Related Works}

\noindent\textbf{Pointwise/Pairwise Training Paradigms.} 
From the perspective of training paradigms, deep supervised hashing methods can be broadly categorized into pairwise and pointwise approaches. Early works such as Deep Hashing~\citep{Liong2015Deep} and HashNet~\citep{cao2017hashnet} follow the pairwise paradigm, which learns hash functions by modeling pairwise or triplet relationships. In contrast, pointwise methods such as CSQ~\citep{yuan2020central}, OrthoHash~\citep{hoe2021one}, and MDSH~\citep{wang2023deep} define category specific hash centers, either pre-defined or learnable, to cluster semantically similar samples around a common center while pushing apart dissimilar ones. Recent studies indicate that pointwise methods perform better on seen categories, whereas pairwise methods generalize more effectively to unseen categories~\cite{shen2025empirical}. In this work, we aim to unify the strengths of both paradigms to achieve balanced retrieval performance across seen and unseen categories.


\noindent\textbf{Deep Mutual Learning (DML).} DML~\citep{zhang2018deep} is initially introduced as a collaborative training strategy in which multiple networks learn jointly by aligning their predicted class probabilities to improve generalization. Building on this idea, subsequent work extended DML to visual object tracking~\citep{zhao2021deep}, where lightweight networks mutually supervise each other during offline training to enhance feature representation and tracking accuracy. Unlike prior approaches~\citep{zhang2018deep,guo2022online,zhao2021novel,zhao2021deep} that train multiple networks under identical objectives, we enable two branches with distinct supervision paradigms to iteratively guide each other for joint hash function optimization.

\noindent\textbf{Mixture of Experts (MoE).} The MoE framework~\citep{shazeer2017outrageously} has recently been applied to large-scale neural networks, leveraging multiple specialized experts and a top-k gating mechanism for efficient computation. It has since been extended to multi-task learning in computer vision, such as AdaMV-MoE~\citep{chen2023adamv}, which introduces task-specific routing and adaptive expert selection for diverse recognition tasks. Inspired by the effectiveness of MoE in modeling task-dependent information~\citep{shazeer2017outrageously,chen2023adamv,chen2022towards,riquelme2021scaling}, we propose the Split-Merge Mixture of Hash Experts (SM-MoH) module within our framework. Unlike standard MoE, SM-MoH adopts a split-merge routing mechanism: branch-specific gates independently select experts to capture distinct supervisory cues, while shared experts merge their outputs to ensure transformation consistency across branches. This design enhances cross-branch interaction in hash representation and improves the overall quality of hash codes.

\begin{figure*}[t]
    \centering
    \includegraphics[width=0.95\textwidth]{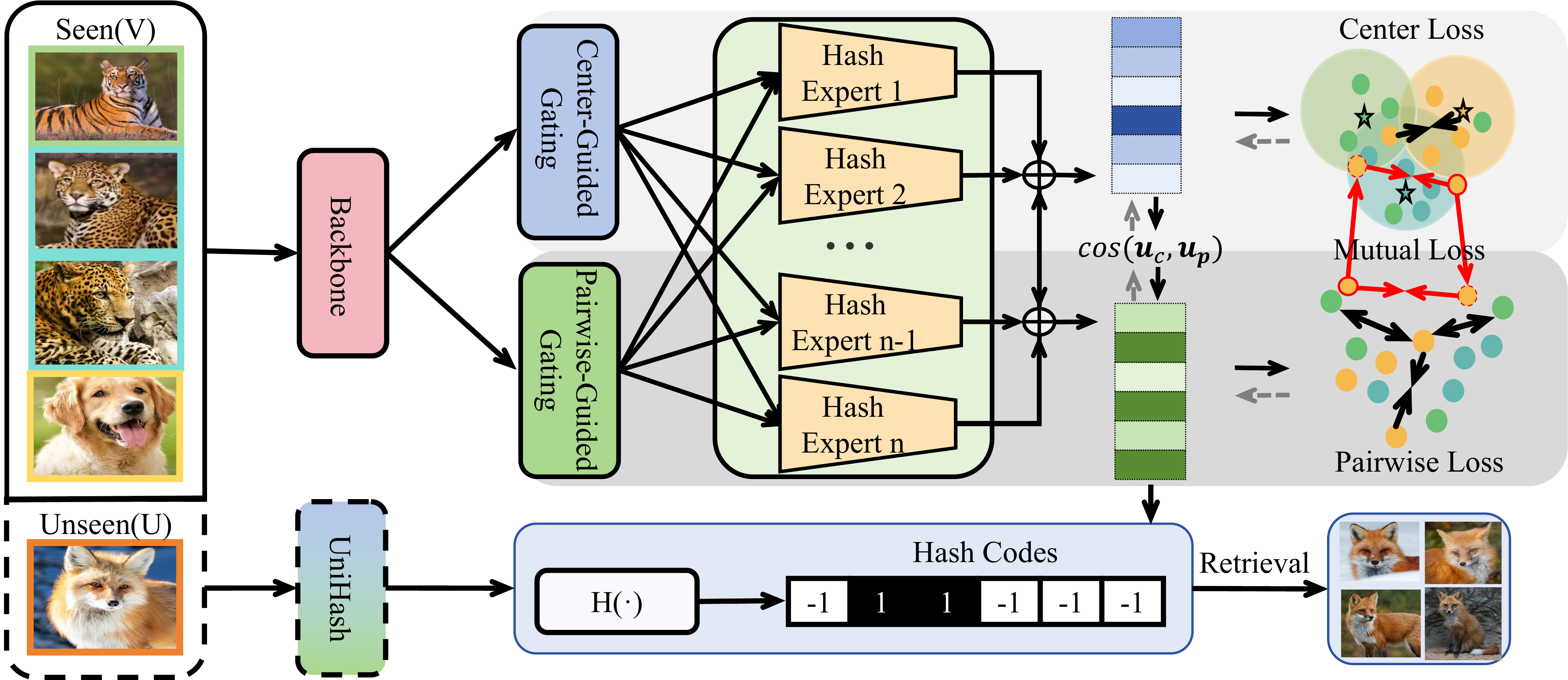}
    \caption{Overview of the proposed Unified Hashing (UniHash) framework. A deep neural network extracts image features, which are fed into two parallel branches producing center-based codes $u_c$ and pairwise codes $u_p$. During training, the center-based branch learns relational cues from the pairwise branch, while the pairwise branch benefits from the semantic compactness of the center-based branch, leading to more discriminative and generalizable hash codes.}
    \label{fig:overall_structure}
\vspace{-3mm}
\end{figure*}
\section{Methodology}
We propose Unified Hashing (UniHash), a framework that unifies the complementary strengths of pairwise and center-based methods to achieve superior retrieval performance across both seen and unseen categories. We first present the overall framework in Sec.~\ref{subsec:framewrok}, and then provide a theoretical analysis of our methodology in Sec.~\ref{sec:theory}.

\subsection{UniHash Framework}
\label{subsec:framewrok}
Given an image dataset $\mathcal{X}=\{x_i\}_{i=1}^N$ with labels 
$\mathcal{Y}=\{y_i\}_{i=1}^N$, deep hashing learns a mapping
\[
H: \mathcal{X} \rightarrow \{-1,1\}^q,
\]
encoding each image $x_i$ into a $q$-bit code 
$b_i = \mathrm{sign}(u_i)$. In our method, a backbone $\phi(\cdot)$ (ResNet-50 typically) extracts features 
$v_i = \phi(x_i)$, followed by two parallel branches that produce 
continuous hash codes $u_i^c,\,u_i^p \in (-1,1)^q$ for the 
center-based and pairwise branches, respectively.

As illustrated in Fig.~\ref{fig:overall_structure}, UniHash comprises two main components: (1) a mutual learning loss that aligns hash representations between the two branches, and (2) a Split-Merge Mixture of Hash Experts (SM-MoH) module that enhances cross-branch interaction in hash representation learning. The details of these components are presented in the following subsections.

\begin{figure*}[htbp]
    \centering
    \includegraphics[width=1.0\textwidth]{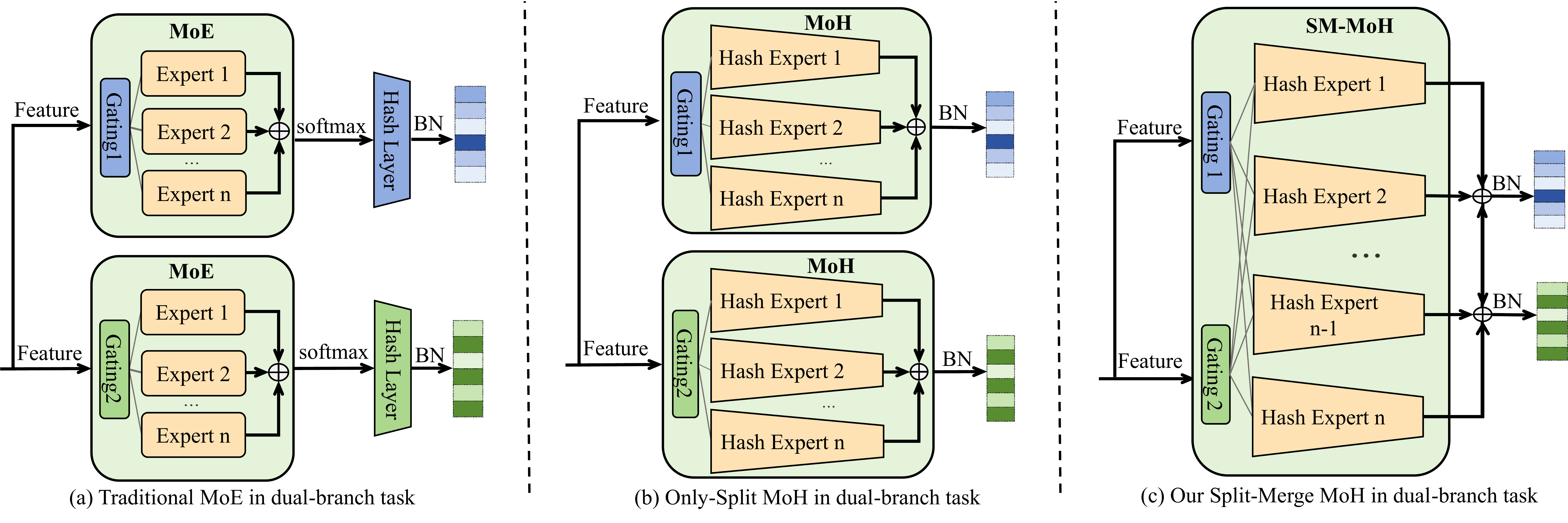}
    \caption{Comparison of expert-based hashing architectures. (a) Conventional MoE with separate experts per branch. (b) Only-Split MoH with independent gates and experts generating continuous hash codes. (c) Our proposed MoH with independent gates and shared experts.}
    \label{fig:MoH}

\end{figure*}
\vspace{-1mm}

\subsubsection{Mutual Learning Loss}

In this paper, mutual learning loss refers to the composite objective combining the center-based loss $L_C$, the pairwise loss $L_P$, and the cross-branch alignment term $L_M$:
\begin{equation}
L=\lambda_1L_C+\lambda_2L_P+\lambda_3L_M,
\end{equation}
where $\lambda_1$, $\lambda_2$, and $\lambda_3$ are the hyerparameters.

\noindent\textbf{Center-based Supervision.} Each semantic class $c\!\in\!\{1,\dots,C\}$ is associated with a learnable hash center $h_c\!\in\!\{-1,1\}^q$. 
For a sample $x_i$, the cosine similarity between its code and each class center is
\begin{equation}
\cos(u_i^c,h_c)=\frac{(u_i^c)^\top h_c}{\|u_i^c\|\|h_c\|}.
\end{equation}

The class probability is computed as
\begin{equation}
P_{i,c}=\frac{\exp(\sqrt{q}\cos(u_i^c,h_c))}{\sum_{r=1}^C \exp(\sqrt{q}\cos(u_i^c,h_r))}.
\end{equation}

The center loss encourages intra-class compactness and inter-class separability:
{\small
\begin{equation}\label{eq:LC}
L_C=-\frac{1}{N}\sum_{i=1}^N\sum_{c=1}^C
\big[y_{i,c}\log P_{i,c}+(1-y_{i,c})\log(1-P_{i,c})\big].
\end{equation}
}

\noindent\textbf{Pairwise Supervision.}
To preserve local similarity, we define a semantic similarity matrix
\begin{equation}
S_{ij}=\mathbb{I}(y_i^\top y_j>0),
\end{equation}
where $S_{ij}=1$ if $x_i$ and $x_j$ share at least one label and $0$ otherwise. 
Let $I_{ij}=\frac{1}{2}(u_i^p)^\top u_j^p$ denote the inner-product similarity of hash codes.
The pairwise loss is given by
{\small
\begin{equation}
L_P=\frac{1}{N}\sum_{i,j}\big[\log(1+e^{-|I_{ij}|})+\max(0,I_{ij})-S_{ij}I_{ij}\big].
\end{equation}
}

This objective captures fine-grained neighborhood relations and enhances generalization to unseen categories.

\noindent\textbf{Cross-branch Mutual Learning.}
To exchange information between branches, we introduce a cosine-based mutual loss:
\begin{equation}
L_M=\mathbb{E}[1-\cos(u^c,u^p)],
\end{equation}
where one branch is detached alternately per epoch to enable bidirectional knowledge flow.


\noindent\textbf{Default Weights.} Unless otherwise specified, we set $\lambda_1{=}4,\ \lambda_2{=}1,\ \lambda_3{=}1$. Impact of hyperparameter tuning is analyzed in Sec.~\ref{sec:ablation}.

\subsubsection{SM-MoH Module}
\label{subsec:MoH}

SM-MoH is a hashing-oriented adaptation of the classic MoE architecture, designed to split learning into branch-specific expert activations and merge them through a shared expert pool for coordinated semantic alignment.

\noindent\textbf{Split Gating and Merged Experts.}
Two independent gating networks, center-guided gating $G_c$ and pairwise-guided gating $G_p: \mathbb{R}^d \to \Delta^{m-1}$,  
produce expert weights 
for shared features $v_n$.
To encourage sparse and specialized routing, we select the top-$k$ experts with the highest activation scores:
\begin{equation}
\mathcal{K}_s(v_n) = \operatorname{TopK}(G_s(v_n), k),
\label{eq:topk_select}
\end{equation}
where $s \in \{c, p\}$.

The selected scores are then normalized to form a sparse routing distribution $\tilde{G}_s(v_n)$, ensuring weights of chosen experts sum to one. A set of shared experts $\{E_i\}_{i=1}^m$, each implemented as a lightweight neural network, then projects the input directly into the continuous hash code space $\mathbb{R}^q$:
\begin{equation}
E_i: \mathbb{R}^d \rightarrow \mathbb{R}^q.
\label{eq:expert_proj}
\end{equation}

For each branch $s \in \{c, p\}$, the aggregated representation is obtained by merging the expert responses under its gating distribution:
\begin{equation}
u_n^{s} = \sum_{i \in \mathcal{K}_s(v_n)} \tilde{G}_s(v_n)_i \cdot E_i(v_n),
\label{eq:merge_topk}
\end{equation}

\noindent\textbf{Design Comparison.} Unlike conventional MoE architectures that separate feature transformation and task-specific heads in Fig.~\ref{fig:MoH}(a), 
SM-MoH integrates expert transformation directly into the hashing process, as illustrated in Fig.~\ref{fig:MoH}(b)--(c). 
Each expert thus acts as an independent hashing pathway, producing semantically meaningful codes that reflect both branch-specific and shared characteristics. 
The final binary hash codes are computed as $b_n^{s} = \operatorname{sign}(u_n^{s})$.

We select the better-performing branch by mAP and use its binary codes for both queries and the database to perform Hamming-distance retrieval, where $s^*=\arg\max_{s\in\{c,p\}}\mathrm{mAP}(s)$ and $b^{s^*}=\mathrm{sign}\!\left(u^{s^*}\right)$.
This split--merge mechanism serves as a bridge for semantic alignment, enhancing the discriminability of the resulting hash codes.

\begin{algorithm}[t]
\caption{UniHash}
\begin{algorithmic}[1]
\Require Training set $\mathcal{D}$, number of experts $m$, top-$k$ value $k$, hash length $q$, iterations $T$, weights $\lambda_1, \lambda_2, \lambda_3$
\State \textbf{Initialize:} DNN backbone $\phi(\cdot)$, shared experts $\{E_i\}_{i=1}^m$, gating networks $G_{\text{c}}, G_{\text{p}}$, hash centers $\{\mathbf{h}_i\}_{i=1}^c$
\For{$t = 1$ to $T$}
    \State Sample mini-batch $X = \{x_1, \dots, x_N\}$ from $\mathcal{D}$
    \State Extract features $V_0 = \{\mathbf{v}_1, \dots, \mathbf{v}_N\} \gets \phi(X)$

    \For{each stream $s \in \{\text{c}, \text{p}\}$}
        \For{each sample $\mathbf{v}_n$ in $V_0$}
            \State Compute expert outputs: $\{E_i(\mathbf{v}_n)\}_{i=1}^m$
            \State Select top-$k$ experts: $\mathcal{K}_s(v_n) \gets \operatorname{TopK}(G_s(v_n), k)$
            \State Normalize selected weights: $\tilde{G}_s(v_n)_i \leftarrow \frac{G_s(v_n)_i}{\sum_{j \in \mathcal{K}_s(v_n)} G_s(v_n)_j}$ for $i \in \mathcal{K}_s(v_n)$
            \State Compute refined hash: $\mathbf{u}_n^{s} \gets \sum_{i \in \mathcal{K}_s(v_n)} \tilde{G}_s(v_n)_i \cdot E_i(v_n)$
        \EndFor
    \EndFor

    \State $L_C \leftarrow \text{CenterLoss}(\{\mathbf{u}_n^{\text{c}}\}, \{\mathbf{h}_i\})$
    \State $L_P \leftarrow \text{PairwiseLoss}(\{\mathbf{u}_n^{\text{p}}\})$

    \State \textbf{Compute mutual loss:}
    \If{$t \bmod 2 = 0$}
        \State Detach $\mathbf{u}_n^{\text{p}}$ as target
        \State $L_M \gets \frac{1}{N} \sum_{n=1}^N \big[1 - \cos(\mathbf{u}_n^{\text{c}}, \text{detach}(\mathbf{u}_n^{\text{p}}))\big]$
    \Else
        \State Detach $\mathbf{u}_n^{\text{c}}$ as target
        \State $L_M \gets \frac{1}{N} \sum_{n=1}^N \big[1 - \cos(\text{detach}(\mathbf{u}_n^{\text{c}}), \mathbf{u}_n^{\text{p}})\big]$
    \EndIf

    \State Total loss: $L \gets \lambda_1 L_C + \lambda_2 L_P + \lambda_3 L_M$
    \State Update $\phi$, $\{E_i\}$, $G_{\text{c}}, G_{\text{p}}$ via RMSProp using $L$
\EndFor
\State \Return Trained model: $\phi$, $\{E_i\}$, $G_{\text{c}}, G_{\text{p}}$
\end{algorithmic}
\end{algorithm}
\vspace{-3mm}

\subsection{Discrepancy Elimination Theory}
\label{sec:theory}

This section explains why UniHash yields stronger performance. Any single branch hashing paradigm, center-based or pairwise, suffers from a paradigm-specific structure discrepancy: an error floor that cannot be removed by increasing the sample size $n$ or the code length $q$. By transferring knowledge between the two branches, UniHash converts this fixed error into a vanishing consistency term, thereby asymptotically eliminating the structural discrepancy.

\subsubsection{Preliminaries: Assumptions and Capacity Terms}

We first establish the assumptions and fundamental capacity bounds that form the basis of our theoretical analysis.

\begin{assumption}[Boundedness, Sparsity \& Lipschitz]\label{bsl}
The feature extractor $\phi(x)$ is bounded: $\|\phi(x)\|\le B_\phi$.
Each expert $E_i$ is $\rho$-Lipschitz and satisfies $\|E_i(z)\|\le B_E$ for $\|z\|\le B_\phi$.
The gate activates at most $k$ experts per input. Under sparse expert routing, the statistical complexity contributes
\begin{equation}
\label{ass:complexity}
C_{stat} = O\!\left(\sqrt{\tfrac{k\,\log m}{n}}\right).
\end{equation}
\end{assumption}

\begin{assumption}[Center Separation]
\label{ass:center}
Pre-generated hash centers 
$\{\mathbf{h}_y \in \{-1,1\}^q\}_{y=1}^{c}$ 
satisfy a minimum Hamming distance 
$\mathrm{Ham}(\mathbf{h}_y,\mathbf{h}_{y'}) \ge d$ 
for $y \neq y'$.
\end{assumption}

\begin{assumption}[Semantic Manifolds]
\label{ass:pair}
The data reside on a union of semantic manifolds 
$\mathcal{M} = \bigcup_t \mathcal{M}_t$ 
with within-class diameter $\delta$ 
and inter-class separation $\Delta \gg \delta$ 
under a semantic metric.
Under Assumption~\ref{ass:center} and \ref{ass:pair}, quantization error decays exponentially in $q$:
\begin{equation}
\label{ass:quant}
C_{quant} = \exp(-\Omega(q)).
\end{equation}
\end{assumption}

\begin{assumption}[Mutual Consistency]
Mutual learning enforces cross-branch agreement:
\begin{equation*}
\mathbb{E}_{x}\!\left[
\|\mathbf{u}^{\mathrm{c}}(x)
-\mathbf{u}^{\mathrm{p}}(x)\|^2
\right]
\le \tau^2.
\end{equation*}
which contracts the joint hypothesis space and induces a consistency penalty.
\begin{equation}
C_{cons} = O(\tau/\sqrt{n}).
\end{equation}
\end{assumption}

\subsubsection{UniHash Superiority Analysis}
We investigate the superiority of UniHash by first characterizing the structural limitation of single-branch paradigms and then proving that UniHash removes this discrepancy through mutual consistency learning.

\begin{proposition}[Irreducible Structural Discrepancy]\label{prop:baseline-gap}
For a single-branch hashing method, the population risk obeys
\begin{equation}
\label{base}
R_{Base}
\;\le\;
\hat{R}_{Base} \;+\; C_{stat} \;+\; C_{quant} \;+\; E_{struct},
\end{equation}
where \(E_{struct}>0\) denotes a paradigm-specific, irreducible discrepancy independent of both $n$ and $q$. 
Specifically, a center-based method, which lacks relational constraints, incurs a \emph{relative} discrepancy \(E_{\mathrm{rel}}>0\), 
whereas a pairwise method, lacking global semantic anchors, suffers from an \emph{absolute} discrepancy \(E_{\mathrm{abs}}>0\).
\end{proposition}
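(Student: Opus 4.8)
The plan is to prove the bound via the standard three-way decomposition of population risk into an empirical term, a capacity gap, and an approximation floor, and then to refine the floor into its paradigm-specific form. First I would write $R_{Base} = \hat{R}_{Base} + (R_{Base} - \hat{R}_{Base})$ and control the generalization gap by a uniform-convergence argument over the hypothesis class induced by the backbone, experts, and gate. Under Assumption~\ref{bsl}, the boundedness $\|\phi(x)\| \le B_\phi$, the $\rho$-Lipschitz experts with $\|E_i\| \le B_E$, and the $k$-sparse routing over $m$ experts yield a Rademacher complexity of order $\sqrt{k \log m / n}$; a standard symmetrization and McDiarmid concentration step then gives the term $C_{stat} = O(\sqrt{k\log m / n})$.

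Next I would separate the continuous surrogate risk from the binary retrieval risk, writing the quantization gap as the difference between evaluating the loss on $u$ and on $\mathrm{sign}(u)$. Under Assumptions~\ref{ass:center} and~\ref{ass:pair}, the center-separation margin $d$ and the manifold separation $\Delta \gg \delta$ ensure that, away from a boundary region of exponentially small measure, the sign of each coordinate is determined with high confidence; a Chernoff/Hoeffding bound on the per-coordinate sign agreement over $q$ bits then collapses the quantization gap to $C_{quant} = \exp(-\Omega(q))$. Collecting the two gaps yields $R_{Base} \le \hat{R}_{Base} + C_{stat} + C_{quant} + E_{struct}$, where $E_{struct}$ is defined as the residual $R^\star_{Base} - R^\star$ between the best attainable single-branch population risk and the ideal retrieval risk $R^\star$.

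The substance of the claim is to show $E_{struct} > 0$ and that it is independent of $n$ and $q$, which I would establish separately for each paradigm by an invariance/non-identifiability argument. For the pairwise branch, I observe that $L_P$ depends on the codes only through the inner products $I_{ij} = \tfrac{1}{2}(u_i^p)^\top u_j^p$, which are invariant under the orthogonal group acting on the code space; hence any target configuration that requires codes to align with the \emph{absolute} positions of the class anchors of Assumption~\ref{ass:center} is unidentifiable from $L_P$ alone, and the infimum of $L_P$ over this degenerate orbit is bounded away from the ideal by a constant $E_{abs} > 0$. For the center-based branch, I would construct a data configuration in which the optimal retrieval ordering depends on cross-class neighborhood relations, while $L_C$ forces all same-class samples toward a single center via cosine alignment; because $L_C$ is blind to this relational structure, its population minimizer misranks a constant fraction of such neighbor pairs, giving $E_{rel} > 0$. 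Crucially, both constructions are stated in the population, infinite-code limit, so neither $E_{abs}$ nor $E_{rel}$ shrinks as $n$ or $q$ grows.

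The hardest part will be this last step: pinning down $E_{struct}$ as a strictly positive, dimension-free constant rather than a quantity that merely decays slowly. The difficulty is that one must exhibit an explicit (or measure-theoretic) witness configuration on the manifolds $\mathcal{M}_t$ for which the paradigm-specific loss provably cannot realize the ideal ranking even at its global optimum, and then verify that the associated risk gap is lower-bounded uniformly in $q$. The orthogonal-invariance argument makes $E_{abs}$ clean, but the center-based case is more delicate, since one must rule out that longer codes could encode the missing relational information through clever center placement; resolving this requires showing that the cosine-to-center objective factors through a class-conditional statistic that discards the needed pairwise information regardless of $q$.
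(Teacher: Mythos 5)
The paper never actually proves this proposition: it is stated as a premise, and the appendix (Sec.~\ref{sub:risk}) merely restates the same inequality as Eq.~\eqref{eq:base-bound} together with the asserted scalings of $C_{stat}$, $C_{quant}$, and $C_{cons}$. So there is no ``paper's proof'' to match; your first two steps (bounding the generalization gap via Rademacher complexity under Assumption~\ref{bsl}, and the quantization gap via Assumptions~\ref{ass:center}--\ref{ass:pair}) simply instantiate the capacity and quantization terms the paper already postulates in Eqs.~\eqref{ass:complexity} and~\eqref{ass:quant}, and are consistent with what the authors intend.

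The genuine gap is in the only part of the proposition that carries real content: showing $E_{struct}>0$ uniformly in $n$ and $q$. Your orthogonal-invariance argument for $E_{\mathrm{abs}}$ establishes that $L_P$ is constant on an orbit of the orthogonal group while the Hamming-space retrieval risk varies over that orbit, but non-identifiability of this kind only shows that the loss cannot \emph{select} the good configuration --- it does not show that \emph{every} population minimizer of $L_P$ incurs an excess risk bounded below by a constant, since the orbit still contains the ideal configuration as one of its minimizers. To turn this into a lower bound you would need either a minimax statement over data distributions (for any learning rule there exists a distribution on which the excess risk exceeds $E_{\mathrm{abs}}$) or an argument about which orbit element the optimization actually reaches; neither is supplied. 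For $E_{\mathrm{rel}}$ you concede the argument yourself: nothing rules out that, as $q$ grows, the centers in $\{-1,1\}^q$ can be placed so that the cosine-to-center objective recovers the needed cross-class neighborhood structure, which would make $E_{\mathrm{rel}}$ depend on $q$ and contradict the claimed independence. Since the proposition asserts precisely that these quantities are strictly positive constants independent of $n$ and $q$, the proposal as written does not close the proof --- though, to be fair, neither does the paper.
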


\begin{theorem}[Discrepancy Elimination via UniHash]
Under all assumptions, the UniHash population risk satisfies
\begin{equation}
\label{UniHash}
R_{UniHash} \le \hat{R}_{UniHash} + C_{stat} + C_{cons} + C_{quant},
\end{equation}
where $C_{stat}$, $C_{cons}$, and $C_{quant}$ all vanish as $n, q \to \infty$.
\end{theorem}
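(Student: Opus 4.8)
The plan is to prove the bound by a uniform-convergence decomposition that parallels Proposition~\ref{prop:baseline-gap}, but in which the Mutual Consistency assumption supplies the structural information that a single branch lacks. I would write the population risk as the empirical risk plus three excess terms -- a generalization gap, a quantization residual, and a structural residual -- and then control these by $C_{stat}$, $C_{quant}$, and $C_{cons}$ respectively. The first two arguments reuse the machinery behind Proposition~\ref{prop:baseline-gap}; the heart of the theorem is showing that the structural residual, a fixed positive constant $E_{struct}$ for any single-branch model, collapses to the vanishing penalty $C_{cons}$ once the two branches are coupled.

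First I would bound the generalization gap by the empirical Rademacher complexity of the coupled hypothesis class. Under Assumption~\ref{bsl}, each expert is $\rho$-Lipschitz and uniformly bounded, and the top-$k$ gate selects a support of size $k$ from $m$ experts, so the number of realizable routing patterns is at most $\binom{m}{k}$. A Dudley/contraction argument over this union of $\binom{m}{k}$ Lipschitz branches gives complexity of order $\sqrt{k\log m/n}$, matching $C_{stat}$ in \eqref{ass:complexity}. Next, the quantization residual measures the gap introduced by the $\mathrm{sign}(\cdot)$ rounding. Under Assumptions~\ref{ass:center} and~\ref{ass:pair}, the centers are separated by Hamming distance at least $d$ and the data lie on manifolds with inter-class margin $\Delta \gg \delta$; a margin-based concentration bound shows the probability of a bit flip under rounding decays exponentially in $q$, yielding the $\exp(-\Omega(q))$ rate of $C_{quant}$ in \eqref{ass:quant}.

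The main step, and the expected obstacle, is the structural residual. For a single branch this residual is the paradigm-specific blind spot identified in Proposition~\ref{prop:baseline-gap}: the center branch lacks relational constraints ($E_{\mathrm{rel}}$) and the pairwise branch lacks global anchors ($E_{\mathrm{abs}}$). The key observation is that the Mutual Consistency assumption contracts the joint hypothesis space to the near-diagonal set on which $\mathbb{E}_x\|u^c(x)-u^p(x)\|^2 \le \tau^2$. On this set the two branches carry each other's missing structure: I would bound the center branch's relational error by a cross-branch term of the form $|\,\mathrm{rel}(u^c)-\mathrm{rel}(u^p)\,|$, which -- because the cosine and inner-product similarity maps are Lipschitz in their arguments under the boundedness of Assumption~\ref{bsl} -- is controlled by $\|u^c-u^p\|$, and symmetrically for the pairwise branch's anchor error. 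Hence the joint structural residual is dominated not by the fixed $E_{struct}$ but by the consistency gap $\mathbb{E}_x\|u^c-u^p\|$ itself.

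Finally I would convert this consistency gap into the stated rate: since the per-sample discrepancy is bounded, the empirical consistency concentrates around its population value at rate $\tau/\sqrt{n}$ by a bounded-differences (McDiarmid) argument, which is exactly $C_{cons}$. Collecting the three bounds gives \eqref{UniHash}, and since $C_{stat}=O(\sqrt{k\log m/n})$, $C_{cons}=O(\tau/\sqrt{n})$, and $C_{quant}=\exp(-\Omega(q))$ each tend to zero as $n,q\to\infty$, the structural discrepancy is asymptotically eliminated. The delicate point throughout is the replacement step: one must verify that the consistency coupling genuinely transfers the complementary structure rather than merely adding a penalty, i.e.\ that the cross-branch Lipschitz bound on the relational and anchor errors is tight enough to absorb $E_{struct}$ entirely, which is the crux of the \emph{discrepancy elimination} claim.
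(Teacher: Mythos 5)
Your proposal reaches the right conclusion and shares the paper's overall skeleton (empirical risk plus three vanishing capacity terms, with the single-branch structural floor $E_{struct}$ replaced by the consistency penalty $C_{cons}$), but it takes a genuinely more ambitious route than the paper does. The paper's proof is essentially assertion-level: it cites Proposition~\ref{prop:baseline-gap} for the existence of the floor, states that the coupling loss $L_M$ ``replaces'' it by $C_{cons}=\mathcal{O}(\tau/\sqrt{n})$, and reads the rates of $C_{stat}$, $C_{quant}$, and $C_{cons}$ directly off Assumptions~\ref{bsl}--3.4, where they are posited rather than derived. You instead attempt to derive each term --- a Dudley/contraction bound over the $\binom{m}{k}$ routing patterns for $C_{stat}$, a margin-based concentration argument for $C_{quant}$, and a McDiarmid argument for $C_{cons}$ --- and, most importantly, to substantiate the replacement step via a cross-branch Lipschitz transfer: bounding each branch's missing structural error by $\|u^c-u^p\|$ and hence by $\tau$. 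That last step is exactly the mathematical content the paper omits, and you correctly flag it as the crux; be aware that your sketch of it is not yet a proof (you would need to make precise what $\mathrm{rel}(\cdot)$ and the anchor error are as functionals, and show the Lipschitz constants do not reintroduce an $n$- and $q$-independent residual), whereas the paper sidesteps the issue entirely by building the conclusion into the Mutual Consistency assumption. In short: your route buys a self-contained derivation at the cost of an unclosed transfer lemma; the paper's route buys brevity at the cost of proving essentially nothing beyond what the assumptions already state.
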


\begin{proof}
By Proposition~\ref{prop:baseline-gap}, any single-branch model has a paradigm-induced error floor $E_{struct}>0$. 
The coupling loss $L_M$ in UniHash enforces branch agreement, replacing this floor by a consistency term 
$C_{cons}=\mathcal{O}(\tau/\sqrt{n})$. 
Together with the statistical and quantization terms, the UniHash risk satisfies
\begingroup\small
\begin{equation}
\small
\label{complex}
R_{UniHash}
\le
\hat{R}_{UniHash}
+\mathcal{O}\!\Big(\sqrt{\tfrac{k\log m}{n}}\Big)
+\mathcal{O}\!\Big(\tfrac{\tau}{\sqrt{n}}\Big)
+\exp\!\big(-\Omega(q)\big).
\end{equation}
\endgroup

Hence $C_{stat},C_{cons}\!\to 0$ as $n\to\infty$ and $C_\mathrm{quant}\!\to 0$ as $q\to\infty$, implying 
$R_{UniHash}-\hat{R}_{UniHash}\to 0$ and thus asymptotic elimination of the paradigm-specific discrepancy. \qedhere
\end{proof}

\section{Experiments}
\textbf{Datasets.} Following prior works \citep{yuan2020central,hoe2021one,wang2023deep,liu2016deep,li2015feature,cao2017hashnet,wang2017deep,su2018greedy,fan2020deep}, we evaluate performance on CIFAR-10 \citep{krizhevsky2009learning}, ImageNet \citep{deng2009imagenet}, and MSCOCO \citep{lin2014microsoft} for category-level retrieval. Evaluation metrics include mean average precision (mAP) and precision-recall curves. We report mAP@1000 for CIFAR-10 and ImageNet, and mAP@5000 for MSCOCO.

\textbf{Training Setup.} Following \citep{yuan2020central,hoe2021one,wang2023deep}, we use a pre-trained ResNet-50 \citep{he2016deep} as backbone $\phi(\cdot)$ and append a 4096-dimensional fully-connected ReLU layer \citep{agarap2019deep} after the 2048-dimensional pooled feature to generate the base features for hashing. These features are processed by a Split-Merge Mixture of Hash Experts (SM-MoH) module, consisting of $m$ shared expert networks $\{E_i\}_{i=1}^m$, two gating networks $G_{\text{c}}$ and $G_{\text{p}}$, and hash centers $\{\mathbf{h}_i\}_{i=1}^c$ for the center stream. Input images are resized to 224$\times$224, and we use a mini-batch size of $N=64$. The model is trained for $T=100$ epochs to optimize the backbone $\phi$, experts $\{E_i\}$, and gating networks $G_c$, $G_p$, by jointly optimizing $\lambda_1$, $\lambda_2$, and $\lambda_3$. The final binary hash code is obtained as $\text{sign}(\mathbf{u}_n^s)$, where $\mathbf{u}_n^s$ ($s \in \{{c}, {p}\}$) are refined continuous codes from center and pairwise streams. Our model is implemented in PyTorch and trained on NVIDIA RTX 4090 GPU using the RMSProp optimizer with learning rate of 0.0001.

\subsection{Closed-Set Retrieval Performance}

\begin{table*}[t]
    \centering
    \setlength{\tabcolsep}{4pt}
    \small
    \caption{Closed-set retrieval mAP on three datasets under different bit lengths. \textbf{Bold} indicates best performance; \underline{Underline} indicates second best performance. Pairwise methods are shaded. DAHNet$^{-}$ denotes DAHNet without class-label supervision.}
        \begin{tabular}{l ccc ccc ccc}
            \toprule
            \multirow{2}{*}{\textbf{Method}} & \multicolumn{3}{c}{\textbf{CIFAR-10(@1000)}} & \multicolumn{3}{c}{\textbf{ImageNet(@1000)}} & \multicolumn{3}{c}{\textbf{MSCOCO(@5000)}} \\
            \cmidrule(lr){2-4} \cmidrule(lr){5-7} \cmidrule(lr){8-10}
            & 16 bits & 32 bits & 64 bits & 16 bits & 32 bits & 64 bits & 16 bits & 32 bits & 64 bits \\
            \midrule
            \rowcolor{black!9}
            DSH~\citep{liu2016deep}  & 0.7313 & 0.7402 & 0.7272 & 0.7179 & 0.7448 & 0.7585 & 0.7221 & 0.7573 & 0.7790 \\
            \rowcolor{black!9}
            DPSH~\citep{li2015feature}  & 0.3098 & 0.3632 & 0.3638 & 0.6241 & 0.7626 & 0.7992 & 0.6239 & 0.6467 & 0.6322 \\
            \rowcolor{black!9}
            HashNet~\citep{cao2017hashnet}  & 0.8959 & 0.9115 & 0.8995 & 0.6024 & 0.7158 & 0.8071 & 0.7540 & 0.7331 & 0.7882 \\
            \rowcolor{black!9}
            DTSH~\citep{wang2017deep}  & 0.7783 & 0.7997 & 0.8312 & 0.6606 & 0.7803 & 0.8120 & \underline{0.7702} & 0.8105 & 0.8233 \\
            GreedyHash~\citep{su2018greedy}  & 0.3519 & 0.5350 & 0.6177 & 0.7394 & 0.7977 & 0.8243 & 0.7625 & 0.8033 & 0.8570 \\
            DPN~\citep{fan2020deep}  & 0.7576 & 0.7901 & 0.8040 & 0.7987 & 0.8298 & 0.8394 & 0.7571 & 0.8227 & 0.8623 \\
            CSQ~\citep{yuan2020central}  & 0.7861 & 0.7983 & 0.7989 & 0.8377 & 0.8750 & 0.8836 & 0.7509 & \underline{0.8471} & \underline{0.8610} \\
            OrthoHash~\citep{hoe2021one}  & 0.9087 & 0.9297 & 0.9454 & 0.8540 & 0.8792 & 0.8936 & 0.7174 & 0.7675 & 0.8060 \\            
            MDSH~\citep{wang2023deep}  & \underline{0.9455} & \underline{0.9554} & \underline{0.9607} & \underline{0.8639} & \underline{0.8863} & \underline{0.9019} & 0.7542 & 0.8131 & 0.8143 \\
            \rowcolor{black!9}
            AGMH~\citep{AGMHhashing} & 0.7831 & 0.7925 & 0.8148 & 0.8535 & 0.8568 & 0.8471 & 0.7031 & 0.6283 & 0.6577\\
            CFBH~\citep{CFBH} & 0.7187 & 0.7397 & 0.7580 & 0.8120 & 0.8572 & 0.8950 & 0.6807 & 0.8325 & 0.8521\\
            \rowcolor{black!9}
            DAHNet$^-$~\citep{DAHNet} & 0.8932 & 0.9087 & 0.9125 & 0.8325 & 0.8831 & 0.8921 & 0.7481 & 0.7993 & 0.8053\\
            \midrule
            \textbf{Ours}  & \textbf{0.9665} & \textbf{0.9657} & \textbf{0.9658} & \textbf{0.8744} & \textbf{0.8975} & \textbf{0.9062} & \textbf{0.7903} & \textbf{0.8675} & \textbf{0.8727} \\
            \bottomrule
        \end{tabular}
    \label{tab:sota_compare}
\end{table*}

\begin{table*}[t]
\centering
\small 
\setlength{\tabcolsep}{4pt} 
\caption{Retrieval under seen/unseen splits. mAP (\%) for Seen@Seen, Seen@All, Unseen@Unseen, and Unseen@All on 20\%-unseen CIFAR-10 and 15\%-unseen ImageNet/MSCOCO. \textbf{Bold}: best; \underline{Underlined}: second or third best. Pairwise methods are shaded. DAHNet$^{*}$ denotes the label-supervised variant of DAHNet.}
\label{tab:seen-unseen-oneheader}
\begin{tabular}{l ccc ccc ccc}
\toprule
& \multicolumn{3}{c}{\textbf{CIFAR-10(@1000)}} & \multicolumn{3}{c}{\textbf{ImageNet(@1000)}} & \multicolumn{3}{c}{\textbf{MSCOCO(@5000)}} \\
\cmidrule(lr){2-4}\cmidrule(lr){5-7}\cmidrule(lr){8-10}
\textbf{Method} & 16 bits & 32 bits & 64 bits & 16 bits & 32 bits & 64 bits & 16 bits & 32 bits & 64 bits \\
\midrule
\multicolumn{10}{l}{\textbf{(a) Seen@Seen / Seen@All}} \\
\addlinespace[2pt]
CSQ~\citep{yuan2020central}       &\uline{95.2}/\uline{95.1} & \uline{95.1/95.0} & 95.0/\uline{94.9} & 85.3/85.0 & 89.1/\uline{88.7} & \uline{90.6/90.2} & 65.0/\uline{79.7} & 68.8/\uline{81.9} & 70.1/\uline{82.4} \\
OrthoHash~\citep{hoe2021one} & 94.6/92.8 & \uline{95.1}/93.2 & \uline{95.7}/\uline{93.7} & \uline{88.2/87.7} & \uline{90.3/89.8} & \uline{91.5/90.9} & 62.7/78.7 & 65.8/80.5 & 68.5/82.4 \\
MDSH~\citep{wang2023deep}      & 92.8/92.3 & 92.5/92.0 & 92.8/89.9 & 75.5/75.3 & 78.8/78.5 & 81.9/81.1 & 64.2/\uline{80.1} & \uline{70.3/81.2} & \uline{70.9/83.3} \\
CHN~\citep{cao2017CHN}       & \uline{95.8/95.7} & \uline{95.9/95.5} & \uline{95.6}/90.1 & \uline{86.4/86.0} & 88.9/88.5 & 90.1/89.5 & -- & -- & -- \\
\rowcolor{black!9}
DSH~\citep{liu2016deep}      & 92.6/91.9 & 92.5/91.8 & 92.9/92.0 & 68.8/67.9 & 79.5/78.2 & 85.3/84.0 & 46.8/64.6 & 51.1/69.7 & 51.4/71.8 \\
\rowcolor{black!9}
HashNet~\citep{cao2017hashnet}   & 80.5/80.1 & 92.7/92.2 & 94.3/93.4 & 49.1/48.9 & 75.8/75.0 & 86.5/85.6 & 52.4/74.8 & 59.9/79.9 & 64.6/82.4 \\
\rowcolor{black!9}
ADSH~\citep{ADSH}   & 95.3/67.0 & 95.0/69.2 & 93.8/61.7 & 83.5/82.7 & \uline{89.7}/88.6 & 90.4/89.3 & \uline{75.1}/66.7 & \uline{79.7}/71.3 & \uline{80.0}/68.3 \\
\rowcolor{black!9}
AGMH~\citep{AGMHhashing}  & 81.0/41.5 & 79.4/12.5 & 81.0/5.5 & 85.8/85.2 & 87.4/86.0 & 84.0/82.6 & \uline{69.0}/58.2 & 59.7/38.8 & 64.1/39.7 \\
\rowcolor{black!9}
DAHNet$^\ast$~\citep{DAHNet}   & 94.8/67.5 & 93.7/63.2 & 92.2/58.6 & 86.1/85.3 & 88.8/87.5 & 88.6/87.5 & -- & -- & -- \\

\midrule
\textbf{Ours} & \textbf{97.3/97.0} & \textbf{97.2/97.1} & \textbf{97.2/97.0} & \textbf{89.9/88.3} & \textbf{92.2/89.9} & \textbf{92.4/91.5} & \textbf{79.3/81.0} & \textbf{81.5/82.3} & \textbf{85.2/83.9} \\
\midrule
\multicolumn{10}{l}{\textbf{(b) Unseen@Unseen / Unseen@All}} \\
\addlinespace[2pt]
CSQ~\citep{yuan2020central}       & 77.3/7.2  & 73.1/10.2 & 75.7/9.4  & 38.1/19.6 & 51.7/\uline{36.2} & 55.8/\uline{42.2} & 46.9/50.1 & \uline{51.2}/54.0 & 52.9/55.8 \\
OrthoHash~\citep{hoe2021one} & \uline{82.0}/31.6 & \uline{83.0}/36.0 & \uline{82.1}/36.3 & 41.3/\uline{25.1} & 50.7/34.8 & \uline{59.9}/40.5 & \uline{49.2}/\uline{54.7} & 50.9/\uline{55.9} & 53.3/\uline{58.1} \\
MDSH~\citep{wang2023deep}      & 76.3/18.8 & 79.4/35.0 & \uline{81.4}/\uline{45.2} & 28.3/10.0 & 31.8/14.8 & 41.1/21.1 & \uline{48.6}/\uline{55.9} & 50.5/\uline{56.2} & \uline{53.4}/57.4 \\
CHN~\citep{cao2017CHN}         & 73.2/6.6  & 73.2/11.6 & 78.0/39.0 & 37.0/18.0 & 46.9/30.8 & 56.2/39.0 & -- & -- & -- \\
\rowcolor{black!9}
DSH~\citep{liu2016deep}       & 74.5/11.0 & 70.8/18.3 & 72.4/20.0 & \uline{42.6}/20.9 & 49.6/26.8 & 56.1/34.6 & 41.1/45.0 & 44.1/47.1 & 46.3/50.2 \\
\rowcolor{black!9}
HashNet~\citep{cao2017hashnet}    & 78.8/9.4  & \uline{77.7}/24.7 & 80.8/38.2 & \uline{42.3}/18.9 & \uline{57.5/32.3} & \uline{71.2/44.8} & 44.5/45.9 & \uline{51.8}/55.8 & \uline{55.9/59.2} \\
\rowcolor{black!9}
ADSH~\citep{ADSH}   & \uline{79.0}/\uline{78.7} & 74.4/59.3 & 75.1/\uline{75.1} & \uline{42.3}/\uline{21.8} & \uline{55.4}/30.7 & 58.0/33.4 & 41.1/44.2 & 50.9/52.7 & 51.6/52.2 \\
\rowcolor{black!9}
AGMH~\citep{AGMHhashing}  & 77.0/\uline{77.2} & 75.4/\uline{75.4} & 73.2/73.2 & 36.4/16.1 & 44.2/21.1 & 48.9/25.6 & 41.1/42.5 & 37.5/37.5 & 40.0/40.0 \\
\rowcolor{black!9}
DAHNet$^\ast$~\citep{DAHNet}  & 76.2/73.9 & 77.0/\uline{76.9} & 75.8/\uline{75.1} & 37.2/16.7 & 46.2/20.5 & 59.6/34.8 & -- & -- & -- \\
\midrule
\textbf{Ours} & \textbf{84.7/82.3} & \textbf{85.2/85.1} & \textbf{85.6/83.8} & \textbf{44.7/28.0} & \textbf{59.1/38.0} & \textbf{71.3/48.2} & \textbf{53.2/58.7} & \textbf{57.9/60.3} & \textbf{58.4/62.1} \\
\bottomrule
\end{tabular}
\end{table*}
\vspace{-1mm}

We compare our method with nine deep hashing algorithms: five pointwise methods (DPN \citep{fan2020deep}, GreedyHash \citep{su2018greedy}, CSQ \citep{yuan2020central}, OrthoHash \citep{hoe2021one}, MDSH \citep{wang2023deep}), three pairwise methods (DSH \citep{liu2016deep}, DPSH \citep{li2015feature}, HashNet \citep{cao2017hashnet}), AGMH~\citep{AGMHhashing},  DAHNet$^-$~\citep{DAHNet}, and one tripletwise method (DTSH \citep{wang2017deep}). Table~\ref{tab:sota_compare} reports the Mean Average Precision (mAP) results for image retrieval. We adopt ResNet-50 as the backbone for all compared methods, including pairwise and pointwise baselines, and our proposed UniHash. Compared to these methods, our method achieves mAP improvements of 1.74\%, 1.21\%, and 0.87\% on MSCOCO, CIFAR-10, and ImageNet, respectively, averaged across different code lengths. 

We further evaluate retrieval performance using Precision-Recall (PR) curves, as shown in Fig.~\ref{fig:pr_curve}. Our method consistently yields a larger Area Under the PR Curve (AUC-PR) across all bit lengths, demonstrating superior precision across a wide range of recall values. These results underscore the robustness and generalization capability of our method for large-scale image retrieval.

\subsection{Generalization to Unseen Categories}
\vspace{-1mm}

To evaluate retrieval generalization on unseen classes, following~\cite{shen2025empirical}, we split datasets into seen and unseen categories, holding out 20\% of CIFAR-10 and 15\% of ImageNet and MSCOCO as unseen and excluding them from training. Table~\ref{tab:seen-unseen-oneheader} reports mAP for four cases: (a) Seen@Seen and Seen@All: queries from seen classes searching among databases of seen-only or all images, and (b) Unseen@Unseen and Unseen@All: queries from novel classes searching among only unseen images or the entire database. It is worth noting that this constitutes a more challenging zero-shot hashing task~\cite{shen2025empirical}. Although zero-shot hashing has been studied in prior works~\cite{10197227,cao2017hashnet}, their experimental settings typically designate only a single category as unseen during testing, which provides a limited assessment of a method’s retrieval capability when facing multiple unseen categories.

As shown in Table~\ref{tab:seen-unseen-oneheader}, UniHash demonstrates robust performance on both seen and unseen retrieval tasks. For seen-category queries, UniHash achieves near-perfect accuracy (\emph{e.g.}, ~97\% mAP on CIFAR-10) and sustains this on the Seen@All task. Competing methods also perform well but often drop slightly on Seen@All due to interference from unseen classes. More importantly, UniHash significantly outperforms prior methods on unseen-category queries. On CIFAR-10, UniHash achieves 84.7\% mAP on Unseen@Unseen (vs. 82.0\% for OrthoHash) and maintains a much higher mAP on Unseen@All (82.3\% vs. 7.2\% for CSQ). Similar trends hold on ImageNet and MSCOCO, where UniHash yields the highest Unseen@Unseen accuracy and clearly better Unseen@All mAP (with about 4\% absolute gains over the best baselines). These results indicate that UniHash enables strong generalization, retrieving novel-class images even with many distractors.
\vspace{-1mm}
\subsection{Ablation Study} 
\label{sec:ablation}

We conduct an ablation study on three datasets under different hash code lengths (16, 32, and 64 bits) to evaluate the effectiveness of the proposed components in our deep hashing network, including both the overall framework modules in Table~\ref{tab:ablation} and the detailed design choices of the Split-Merge Mixture of Hash Experts (SM-MoH) head in Table~\ref{tab:model_comparison}.

\textbf{Analysis of Overall Framework.} To evaluate the contribution of each component in overall architecture, we conducted a closed-set ablation study on both the SM-MoH and Mutual Learning (ML) loss modules, as shown in Table~\ref{tab:ablation}. Baseline (without either SM-MoH or ML) shows that the center-based branch consistently outperforms the pairwise branch (\emph{e.g.}, 0.8940 vs. 0.8873 on ImageNet at 64 bits), indicating stronger standalone effectiveness. ML alone improves the weaker pairwise branch (\emph{e.g.}, 0.8894 vs. 0.8873 pairwise branch on ImageNet at 64 bits), by enabling bidirectional knowledge transfer. Gains for branches are smaller due to limited diversity without SM-MoH. SM-MoH is not a stand-alone architectural add-on; it is designed to facilitate cross-branch structured interaction. In isolation, SM-MoH slightly reduces performance (\emph{e.g.}, center-based: 0.8325 vs.\ 0.8940), whereas combining SM-MoH with mutual learning yields the best results (\emph{e.g.}, 0.8997/0.9062 on ImageNet at 64 bits). This trend suggests that SM-MoH strengthens inter-branch communication, and mutual learning further amplifies complementary supervision, jointly improving intra-class compactness and inter-class separability.

\begin{figure}[t]
\centering
    \begin{subfigure}[b]{0.33\linewidth}
        \centering
        \includegraphics[width=\linewidth]{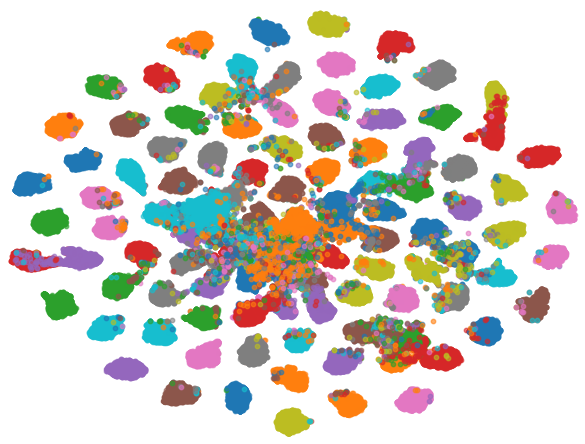}
        \caption{Center Hashing}
    \end{subfigure}\hfill
    \begin{subfigure}[b]{0.33\linewidth}
        \centering
        \includegraphics[width=\linewidth]{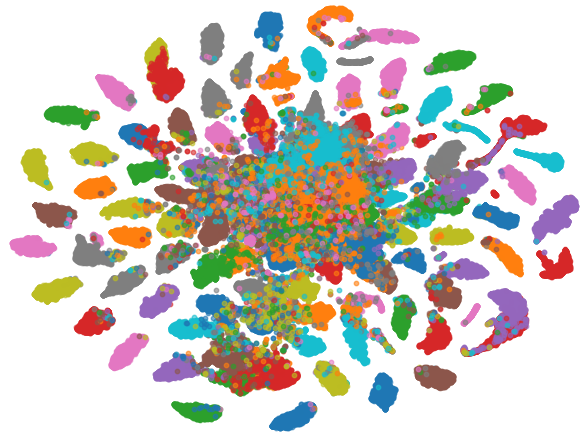}
        \caption{Pairwise Hashing}
    \end{subfigure}\hfill
    \begin{subfigure}[b]{0.33\linewidth}
        \centering
        \includegraphics[width=\linewidth]{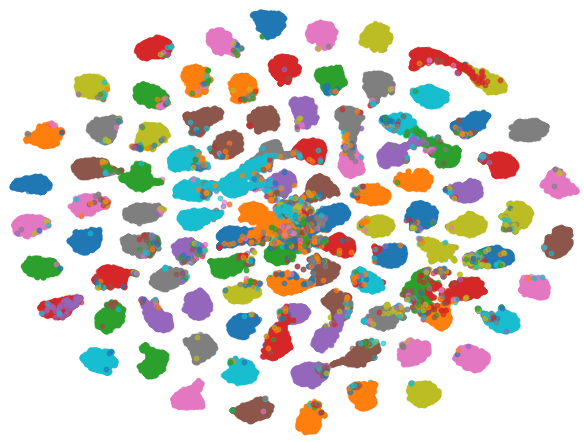}
        \caption{Unified Hashing}
    \end{subfigure}
\caption{Impact of Unified Hashing (UniHash) on hash code distribution on ImageNet with 16-bit configuration.}
\label{fig:comparison}
\end{figure}

\begin{figure}[t]
    \centering
    \begin{subfigure}[t]{0.33\linewidth}
        \centering
        \includegraphics[width=\linewidth]{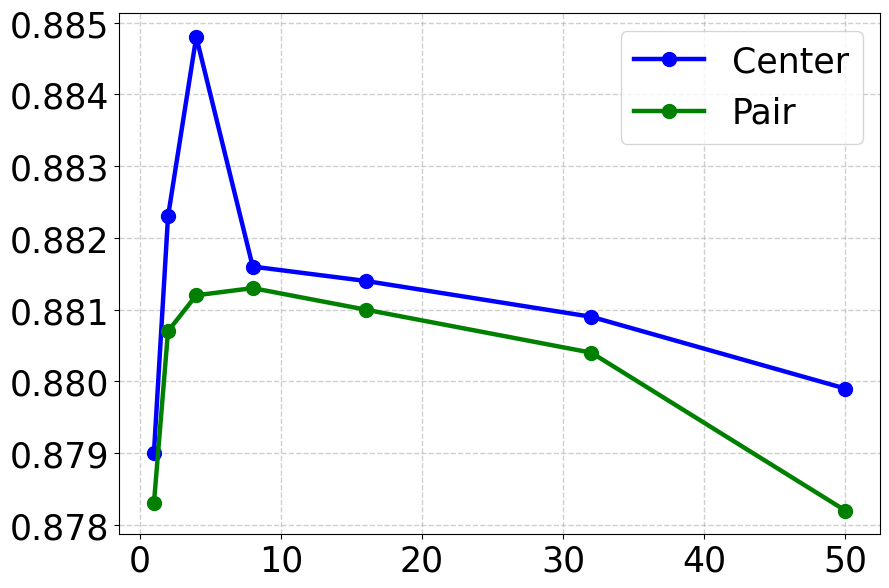}
        \caption{$\lambda_1$ Tuning Effects}
    \end{subfigure}\hfill
    \begin{subfigure}[t]{0.33\linewidth}
        \centering
        \includegraphics[width=\linewidth]{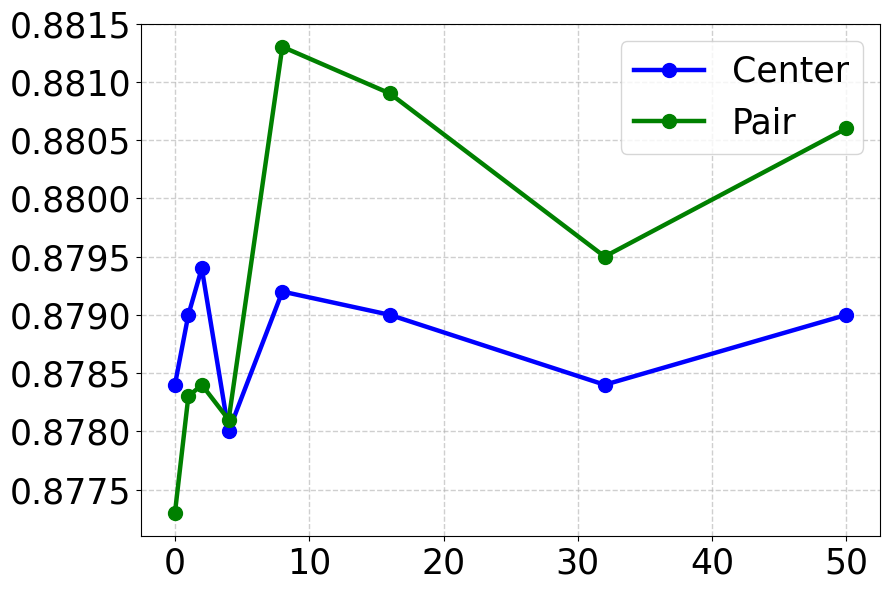}
        \caption{$\lambda_2$ Tuning Effects}
    \end{subfigure}\hfill
    \begin{subfigure}[t]{0.33\linewidth}
        \centering
        \includegraphics[width=\linewidth]{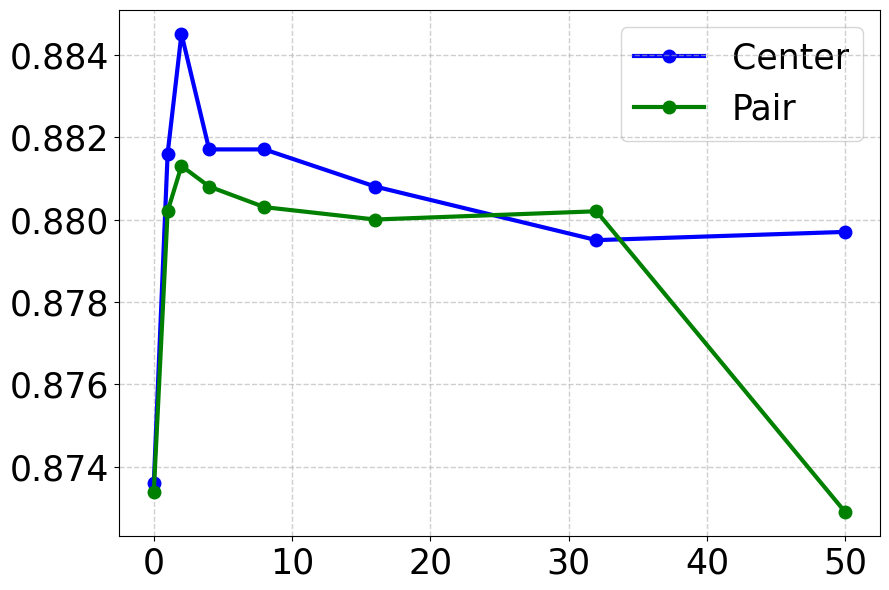}
        \caption{$\lambda_3$ Tuning Effects}
    \end{subfigure}
    \caption{Impact of hyperparameter tuning on mAP for ImageNet with 16-bit configuration.}
    \label{fig:hyperparameters}
\end{figure}


\begin{table}[t]
\centering
\caption{Ablation study of Mutual learning (ML) and SM-MoH across different datasets for 64 bits. The best results are \textbf{bolded}.}
\label{tab:ablation}
\resizebox{1.0\linewidth}{!}{ 
\setlength{\tabcolsep}{2pt}
\footnotesize 
\begin{tabular}{ccc *{4}{c}}
\toprule
 & \multicolumn{2}{c}{Modules} & \multicolumn{2}{c}{\textbf{CIFAR-10(@1000)}} & \multicolumn{2}{c}{\textbf{ImageNet(@1000)}} \\
\cmidrule(lr){2-3} \cmidrule(lr){4-5} \cmidrule(lr){6-7}
Baseline & ML & SM-MoH & center-based & pairwise & center-based & pairwise \\
\midrule
$\checkmark$ &  &  & 0.9607 & 0.9605 & 0.8940 & 0.8873 \\
$\checkmark$ & $\checkmark$ &  & 0.9586 & 0.9634 & \textbf{0.9037} & 0.8894 \\
$\checkmark$ &  & $\checkmark$ & 0.9357 & 0.9639 & 0.8325 & 0.9003 \\
$\checkmark$ & $\checkmark$ & $\checkmark$ & \textbf{0.9611} & \textbf{0.9658} & 0.8997 & \textbf{0.9062} \\
\bottomrule
\end{tabular}
}
\end{table}


\begin{table}[t]
\centering
\caption{Ablation study of SM-MoH design variants with 64-bit codes. $\sigma(\cdot)$ is softmax function. Best results are in \textbf{bold}.}
\label{tab:model_comparison}
\small
\setlength{\tabcolsep}{1.5pt} 
\renewcommand{\arraystretch}{1.0} 
\begin{tabular}{llcccc}
\toprule
Model & Experts & $\sigma(\cdot)$ & \textbf{CIFAR-10} & \textbf{ImageNet} & \textbf{MSCOCO} \\
\midrule
2DNNs+MoE & separate & $\checkmark$ & 0.9634 & 0.8862 & 0.8605 \\
1DNN+MoE  & separate & $\checkmark$ & 0.9643 & 0.9003 & 0.8693 \\
1DNN+MoH  & separate & $\times$     & 0.9651 & 0.9033 & 0.8697 \\
1DNN+MoH  & shared   & $\checkmark$ & 0.9647 & 0.9012 & 0.8684 \\
1DNN+MoH  & shared   & $\times$     & \textbf{0.9658} & \textbf{0.9062} & \textbf{0.8727} \\
\bottomrule
\end{tabular}
\end{table}
\vspace{-2mm}





\textbf{Design Variation in SM-MoH.} We next evaluate several design alternatives for the SM-MoH in Table~\ref{tab:model_comparison}. The conventional setup (2DNNs+MoE) employs two separate DNNs with expert modules. It achieves noticeable improvements over the baseline (\emph{e.g.}, 0.9643 vs. 0.9607 on CIFAR-10), demonstrating the benefit of collaborative learning. However, to further enhance performance and simplify the architecture, we explore alternative designs. Switching to a single-branch structure (1DNN+MoE) leads to slightly improved results (\emph{e.g.}, 0.9634 vs. 0.9643 on CIFAR-10; 0.8605 vs. 0.8693 on MSCOCO), confirming that a unified backbone provides more coherent feature learning for hashing tasks. Replacing MoE with our proposed SM-MoH module—which maps features directly to the hash space—further improves performance (\emph{e.g.}, 0.9651 on CIFAR-10, 0.9033 on ImageNet), demonstrating its better alignment with deep hashing objectives. This gain holds for both separate and shared expert configurations. Sharing experts across branches removes redundancy and enhances learning consistency, with shared MoH slightly outperforming its separate counterpart (\emph{e.g.}, 0.9647 vs. 0.9651 on CIFAR-10; 0.8684 vs. 0.8697 on MSCOCO). Removing the softmax layer yields the best performance across all datasets (\emph{e.g.}, 0.9658 on CIFAR-10, 0.9062 on ImageNet, 0.8727 on MSCOCO), likely due to improved code separability by avoiding over-smoothing among experts.

\textbf{Hyperparameter Tuning.} Figure~\ref{fig:hyperparameters} shows a hyperparameter study on ImageNet with 16-bit hash codes, varying one of $\lambda_1$, $\lambda_2$, or $\lambda_3$ while fixing the others. $\lambda_1$ and $\lambda_2$ control the center- and pairwise-based losses, and $\lambda_3$ regulates mutual loss. The results reveal a dominant–auxiliary dynamic: when $\lambda_{1} \gg \lambda_{2}$, the center branch outperforms; conversely, when $\lambda_{2} \gg \lambda_{1}$, the pairwise branch outperforms. Configurations with $\lambda_1 > \lambda_2$ achieve higher mAP, so the center branch serves as the primary while the pairwise branch fine-tunes it. Both $\lambda_1$ and $\lambda_2$ exhibit unimodal trends, with optimal values within $[1,10]$. And $\lambda_3$ performs best at $1$, beyond which excessive coupling degrades performance. The optimal setting is $\lambda_1=4$, $\lambda_2=1$, $\lambda_3=1$.

\subsection{Hash Codes Visualization}
To illustrate UniHash, we visualize the t-SNE \citep{vanDerMaaten2008TSNE} of hash codes generated by three methods on ImageNet, as shown in Fig.~\ref{fig:comparison}: (a) center-based \citep{wang2023deep}, (b) pairwise-based \citep{zheng2020deep}, and (c) our UniHash approach. The center-based method in (a) produces a circular distribution, indicating decent global alignment but limited fine-grained separation. The pairwise-based method in (b) yields a more elongated structure with overlapping clusters, suggesting weaker overall structure. In contrast, UniHash in (c) consistently leads to a more well-clustered distribution, with fewer ambiguous points across class boundaries. These results show that UniHash improves intra-class compactness and inter-class separability by making the center- and pairwise-based branches complementary, yielding more discriminative hash codes.
\vspace{-2mm}

\section{Conclusions}


We propose Unified Hashing (UniHash), a dual-branch framework designed to achieve good image retrieval performance across both seen and unseen categories. UniHash comprises two complementary branches: one following the pointwise training paradigm and the other following the pairwise paradigm. To strengthen cross-branch interaction and representation alignment, we introduce a novel hash code learning method that facilitates deeper information exchange between branches, yielding more discriminative and generalizable hash codes. Theoretical analysis verifies the effectiveness of UniHash, and extensive experiments on CIFAR-10, MSCOCO, and ImageNet demonstrate state-of-the-art performance under both closed-set and seen/unseen retrieval protocols, confirming its robustness across diverse retrieval settings.

\clearpage

\bibliographystyle{unsrtnat}
\bibliography{references.bib}

@article{agarap2019deep,
  author  = {A. F. Agarap},
  title   = {Deep learning using rectified linear units ({ReLU})},
  journal = {arXiv preprint arXiv:1803.08375},
  year    = {2019}
}

@inproceedings{cao2017hashnet,
  author    = {Z. Cao and M. Long and J. Wang and P. S. Yu},
  title     = {{HashNet}: Deep learning to hash by continuation},
  booktitle = {Proceedings of the IEEE International Conference on Computer Vision (ICCV 2017)},
  year      = {2017},
  pages     = {5608--5617}
}

@inproceedings{cao2018deep,
  author    = {Y. Cao and M. Long and B. Liu and J. Wang},
  title     = {Deep Cauchy hashing for Hamming space retrieval},
  booktitle = {Proceedings of the IEEE Conference on Computer Vision and Pattern Recognition (CVPR 2018)},
  year      = {2018},
  pages     = {1229--1237}
}

@inproceedings{chen2022towards,
  author    = {Z. Chen and Y. Deng and Y. Wu and Q. Gu and Y. Li},
  title     = {Towards understanding the mixture-of-experts layer in deep learning},
  booktitle = {Advances in Neural Information Processing Systems (NeurIPS 2022)},
  year      = {2022},
  volume    = {35},
  pages     = {23049--23062}
}

@inproceedings{deng2009imagenet,
  author    = {J. Deng and W. Dong and R. Socher and L.-J. Li and K. Li and L. Fei-Fei},
  title     = {{ImageNet}: A large-scale hierarchical image database},
  booktitle = {Proceedings of the IEEE Conference on Computer Vision and Pattern Recognition (CVPR 2009)},
  year      = {2009},
  pages     = {248--255}
}

@article{deng2018triplet,
  author  = {C. Deng and Z. Chen and X. Liu and X. Gao and D. Tao},
  title   = {Triplet-based deep hashing network for cross-modal retrieval},
  journal = {IEEE Transactions on Image Processing},
  year    = {2018},
  volume  = {27},
  number  = {8},
  pages   = {3893--3903},
  doi     = {10.1109/TIP.2018.2821921}
}

@inproceedings{fan2020deep,
  author    = {L. Fan and K. W. Ng and C. Ju and T. Zhang and C. S. Chan},
  title     = {Deep polarized network for supervised learning of accurate binary hashing codes},
  booktitle = {Proceedings of the International Joint Conference on Artificial Intelligence (IJCAI 2020)},
  year      = {2020},
  pages     = {825--831}
}

@inproceedings{he2016deep,
  author    = {K. He and X. Zhang and S. Ren and J. Sun},
  title     = {Deep residual learning for image recognition},
  booktitle = {Proceedings of the IEEE Conference on Computer Vision and Pattern Recognition (CVPR 2016)},
  year      = {2016},
  pages     = {770--778}
}

@article{he2024flexible,
  author  = {L. He and Y. Zhang and R. Li and Z. Huang and R. Wu and E. Chen},
  title   = {Nested Hash Layer: A Plug-and-play Module for Multiple-length Hash Code Learning},
  journal = {arXiv preprint arXiv:2412.08922},
  year    = {2024}
}

@inproceedings{hoe2021one,
  author    = {J. T. Hoe and K. W. Ng and T. Zhang and C. S. Chan and Y.-Z. Song and T. Xiang},
  title     = {One loss for all: Deep hashing with a single cosine similarity based learning objective},
  booktitle = {Advances in Neural Information Processing Systems (NeurIPS 2021)},
  year      = {2021},
  volume    = {34},
  pages     = {24286--24298}
}

@article{jiang2025online,
  author  = {Z. Jiang and Z. Weng and R. Li and H. Zhuang and Z. Lin},
  title   = {Online weighted hashing for cross-modal retrieval},
  journal = {Pattern Recognition},
  year    = {2025},
  volume  = {161},
  pages   = {111232}
}

@inproceedings{kobayashi2021t,
  author    = {T. Kobayashi},
  title     = {{T}-vMF similarity for regularizing intra-class feature distribution},
  booktitle = {Proceedings of the IEEE/CVF Conference on Computer Vision and Pattern Recognition (CVPR 2021)},
  year      = {2021},
  pages     = {6616--6625}
}

@inproceedings{kong2024mitigating,
  author    = {F. Kong and S. Yuan and W. Hao and R. Henao},
  title     = {Mitigating test-time bias for fair image retrieval},
  booktitle = {Advances in Neural Information Processing Systems (NeurIPS 2024)},
  year      = {2024},
  volume    = {36}
}

@techreport{krizhevsky2009learning,
  author      = {A. Krizhevsky and G. Hinton},
  title       = {Learning multiple layers of features from tiny images},
  institution = {University of Toronto},
  year        = {2009}
}

@inproceedings{li2011hashing,
  author    = {P. Li and A. Shrivastava and J. Moore and A. K{\"o}nig},
  title     = {Hashing algorithms for large-scale learning},
  booktitle = {Advances in Neural Information Processing Systems (NeurIPS 2011)},
  year      = {2011},
  volume    = {24}
}

@inproceedings{li2015feature,
  author    = {W.-J. Li and S. Wang and W.-C. Kang},
  title     = {Feature learning based deep supervised hashing with pairwise labels},
  booktitle = {Proceedings of the Twenty-Fifth International Joint Conference on Artificial Intelligence (IJCAI 2015)},
  year      = {2016},
  pages     = {1711--1717}
}

@article{liang2021deep,
  author  = {Y. Liang and Y. Pan and H. Lai and W. Liu and J. Yin},
  title   = {Deep listwise triplet hashing for fine-grained image retrieval},
  journal = {IEEE Transactions on Image Processing},
  year    = {2021},
  volume  = {31},
  pages   = {949--961}
}

@inproceedings{lin2014microsoft,
  author    = {T.-Y. Lin and M. Maire and S. Belongie and J. Hays and P. Perona and D. Ramanan and P. Doll{\'a}r and C. L. Zitnick},
  title     = {Microsoft {COCO}: Common objects in context},
  booktitle = {Proceedings of the European Conference on Computer Vision (ECCV 2014)},
  year      = {2014},
  pages     = {740--755}
}

@inproceedings{liu2016deep,
  author    = {H. Liu and R. Wang and S. Shan and X. Chen},
  title     = {Deep supervised hashing for fast image retrieval},
  booktitle = {Proceedings of the IEEE Conference on Computer Vision and Pattern Recognition (CVPR 2016)},
  year      = {2016},
  pages     = {2064--2072}
}

@inproceedings{qin2018gph,
  author    = {J. Qin and Y. Wang and C. Xiao and W. Wang and X. Lin and Y. Ishikawa},
  title     = {{GPH}: Similarity search in Hamming space},
  booktitle = {Proceedings of the IEEE International Conference on Data Engineering (ICDE 2018)},
  year      = {2018},
  pages     = {29--40}
}

@inproceedings{riquelme2021scaling,
  author    = {C. Riquelme and J. Puigcerver and B. Mustafa and M. Neumann and R. Jenatton and A. S. Pinto and D. Keysers and N. Houlsby},
  title     = {Scaling vision with sparse mixture of experts},
  booktitle = {Advances in Neural Information Processing Systems (NeurIPS 2021)},
  year      = {2021},
  volume    = {34},
  pages     = {8583--8595}
}

@article{shazeer2017outrageously,
  author  = {N. Shazeer and A. Mirhoseini and K. Maziarz and A. Davis and Q. Le and G. Hinton and J. Dean},
  title   = {Outrageously large neural networks: The sparsely-gated mixture-of-experts layer},
  journal = {arXiv preprint arXiv:1701.06538},
  year    = {2017}
}

@inproceedings{su2018greedy,
  author    = {S. Su and C. Zhang and K. Han and Y. Tian},
  title     = {Greedy hash: Towards fast optimization for accurate hash coding in {CNN}},
  booktitle = {Advances in Neural Information Processing Systems (NeurIPS 2018)},
  year      = {2018},
  volume    = {31}
}

@inproceedings{wang2017deep,
  author    = {X. Wang and Y. Shi and K. M. Kitani},
  title     = {Deep supervised hashing with triplet labels},
  booktitle = {Proceedings of the Asian Conference on Computer Vision (ACCV 2017)},
  year      = {2017},
  pages     = {70--84}
}

@inproceedings{wang2023deep,
  author    = {L. Wang and Y. Pan and C. Liu and H. Lai and J. Yin and Y. Liu},
  title     = {Deep hashing with minimal-distance-separated hash centers},
  booktitle = {Proceedings of the IEEE/CVF Conference on Computer Vision and Pattern Recognition (CVPR 2023)},
  year      = {2023},
  pages     = {23455--23464}
}

@article{wang2023improving,
  author  = {J. Wang and X.-L. Zhang},
  title   = {Improving pseudo labels with intra-class similarity for unsupervised domain adaptation},
  journal = {Pattern Recognition},
  year    = {2023},
  volume  = {138},
  pages   = {109379},
  doi     = {10.1016/j.patcog.2023.109379}
}

@inproceedings{wei2022weakly,
  author    = {J. Wei and S. Wang and S. K. Zhou and S. Cui and Z. Li},
  title     = {Weakly supervised object localization through inter-class feature similarity and intra-class appearance consistency},
  booktitle = {Proceedings of the European Conference on Computer Vision (ECCV 2022)},
  year      = {2022},
  pages     = {195--210}
}

@inproceedings{wu2019mutual,
  author    = {R. Wu and M. Feng and W. Guan and D. Wang and H. Lu and E. Ding},
  title     = {A mutual learning method for salient object detection with intertwined multi-supervision},
  booktitle = {Proceedings of the IEEE/CVF Conference on Computer Vision and Pattern Recognition (CVPR 2019)},
  year      = {2019},
  pages     = {8150--8159}
}

@inproceedings{wu2023forb,
  author    = {P. Wu and S. Wang and K. D. Rosa and D. Hu},
  title     = {{FORB}: A flat object retrieval benchmark for universal image embedding},
  booktitle = {Advances in Neural Information Processing Systems (NeurIPS 2023)},
  year      = {2023},
  volume    = {36},
  pages     = {25448--25460}
}

@inproceedings{xuan2021intra,
  author    = {S. Xuan and S. Zhang},
  title     = {Intra-inter camera similarity for unsupervised person re-identification},
  booktitle = {Proceedings of the IEEE/CVF Conference on Computer Vision and Pattern Recognition (CVPR2021)},
  year      = {2021},
  pages     = {11926--11935}
}

@article{yang2018adversarial,
  author  = {E. Yang and T. Liu and C. Deng and D. Tao},
  title   = {Adversarial examples for Hamming space search},
  journal = {IEEE Transactions on Cybernetics},
  year    = {2018},
  volume  = {50},
  number  = {4},
  pages   = {1473--1484}
}

@inproceedings{yuan2020central,
  author    = {L. Yuan and T. Wang and X. Zhang and F. E. H. Tay and Z. Jie and W. Liu and J. Feng},
  title     = {Central similarity quantization for efficient image and video retrieval},
  booktitle = {Proceedings of the IEEE/CVF Conference on Computer Vision and Pattern Recognition (CVPR 2020)},
  year      = {2020},
  pages     = {3083--3092}
}

@inproceedings{zhang2018deep,
  author    = {Y. Zhang and T. Xiang and T. M. Hospedales and H. Lu},
  title     = {Deep mutual learning},
  booktitle = {Proceedings of the IEEE Conference on Computer Vision and Pattern Recognition (CVPR 2018)},
  year      = {2018},
  pages     = {4320--4328}
}

@article{zhao2021deep,
  author  = {H. Zhao and G. Yang and D. Wang and H. Lu},
  title   = {Deep mutual learning for visual object tracking},
  journal = {Pattern Recognition},
  year    = {2021},
  volume  = {112},
  pages   = {107796}
}

@article{zheng2020deep,
  author  = {X. Zheng and Y. Zhang and X. Lu},
  title   = {Deep balanced discrete hashing for image retrieval},
  journal = {Neurocomputing},
  year    = {2020},
  volume  = {403},
  pages   = {224--236}
}

@inproceedings{cao2025deep,
  author    = {Y. Cao and X. Chen and Z. Liu and W. Jia and F. Meng and J. Gui},
  title     = {Deep graph online hashing for multi-label image retrieval},
  booktitle = {Proceedings of the AAAI Conference on Artificial Intelligence (AAAI 2025)},
  year      = {2025},
  volume    = {39},
  pages     = {1953--1961}
}

@inproceedings{liang2024self,
  author    = {M. Liang and J. Du and Z. Liang and Y. Xing and W. Huang and Z. Xue},
  title     = {Self-supervised multi-modal knowledge graph contrastive hashing for cross-modal search},
  booktitle = {Proceedings of the AAAI Conference on Artificial Intelligence (AAAI 2024)},
  year      = {2024},
  volume    = {38},
  pages     = {13744--13753}
}

@inproceedings{jegou2008hamming,
  author    = {H. J{\'e}gou and M. Douze and C. Schmid},
  title     = {Hamming embedding and weak geometric consistency for large scale image search},
  booktitle = {Proceedings of the European Conference on Computer Vision (ECCV 2008)},
  year      = {2008},
  pages     = {304--317}
}

@inproceedings{guo2022online,
  author    = {Y. Guo and B. Liu and D. Zhao},
  title     = {Online continual learning through mutual information maximization},
  booktitle = {Proceedings of the International Conference on Machine Learning (ICML 2022)},
  year      = {2022},
  pages     = {8109--8126}
}

@inproceedings{zhao2021novel,
  author    = {B. Zhao and K. Han},
  title     = {Novel visual category discovery with dual ranking statistics and mutual knowledge distillation},
  booktitle = {Advances in Neural Information Processing Systems (NeurIPS 2021)},
  year      = {2021},
  volume    = {34},
  pages     = {22982--22994}
}

@inproceedings{chen2023adamv,
  author    = {T. Chen and X. Chen and X. Du and A. Rashwan and F. Yang and H. Chen and Z. Wang and Y. Li},
  title     = {{AdamV-MoE}: Adaptive multi-task vision mixture-of-experts},
  booktitle = {Proceedings of the IEEE/CVF International Conference on Computer Vision (ICCV 2023)},
  year      = {2023},
  pages     = {17346--17357}
}

@article{shen2025empirical,
  author  = {Y. Shen and P. Wang and X.-S. Wei and Y. Yao},
  title   = {An empirical study on training paradigms for deep supervised hashing},
  journal = {International Journal of Computer Vision (IJCV)},
  year    = {2025},
  pages   = {1--39}
}

@article{CFBH,
  author  = {X. Xiang and X. Ding and L. Jin and Z. Li and J. Tang and R. Jain},
  title   = {Alleviating over-fitting in hashing-based fine-grained image retrieval: From causal feature learning to binary-injected hash learning},
  journal = {IEEE Transactions on Multimedia},
  year    = {2024},
  volume  = {26},
  pages   = {10665--10677}
}

@inproceedings{cao2017CHN,
  author    = {Yue Cao and Mingsheng Long and Jianmin Wang},
  title     = {Correlation Hashing Network for Efficient Cross-Modal Retrieval},
  booktitle = {British Machine Vision Conference (BMVC 2017)},
  year      = {2017}
}

@inproceedings{Liong2015Deep,
  author    = {V. E. Liong and J. Lu and G. Wang and P. Moulin and J. Zhou},
  title     = {Deep hashing for compact binary codes learning},
  booktitle = {Proceedings of the IEEE Conference on Computer Vision and Pattern Recognition (CVPR 2015)},
  year      = {2015},
  pages     = {2475--2483}
}

@article{vanDerMaaten2008TSNE,
  author  = {Laurens van der Maaten and Geoffrey Hinton},
  title   = {Visualizing Data Using {t-SNE}},
  journal = {Journal of Machine Learning Research},
  year    = {2008},
  volume  = {9},
  number  = {86},
  pages   = {2579--2605}
}

@article{10197227,
  author  = {Xiu-Shen Wei and Yang Shen and Xuhao Sun and Peng Wang and Yuxin Peng},
  title   = {Attribute-Aware Deep Hashing With Self-Consistency for Large-Scale Fine-Grained Image Retrieval},
  journal = {IEEE Transactions on Pattern Analysis and Machine Intelligence},
  year    = {2023},
  volume  = {45},
  number  = {11},
  pages   = {13904--13920}
}

@inproceedings{AGMHhashing,
  author    = {Xin Lu and Shikun Chen and Yichao Cao and Xin Zhou and Xiaobo Lu},
  title     = {Attributes Grouping and Mining Hashing for Fine-Grained Image Retrieval},
  booktitle = {Proceedings of the 31st ACM International Conference on Multimedia (MM 2023)},
  year      = {2023},
  pages     = {6558--6566},
  publisher = {Association for Computing Machinery},
  address   = {New York, NY, USA}
}

@inproceedings{ADSH,
  author    = {Qing-Yuan Jiang and Wu-Jun Li},
  title     = {Asymmetric deep supervised hashing},
  booktitle = {Proceedings of the AAAI Conference on Artificial Intelligence (AAAI 2018)},
  year      = {2018}
}

@article{DAHNet,
  author  = {Xin Jiang and Hao Tang and Zechao Li},
  title   = {Global Meets Local: Dual Activation Hashing Network for Large-Scale Fine-Grained Image Retrieval},
  journal = {IEEE Transactions on Knowledge and Data Engineering},
  year    = {2024},
  volume  = {36},
  number  = {11},
  pages   = {6266--6279},
  publisher = {IEEE Computer Society},
  address   = {Los Alamitos, CA, USA}
}

\clearpage
\appendix
\onecolumn

\appendix

\section{Summary}

This appendix provides detailed insights and additional experimental results to support the main paper.

\section{Closed-Set Training Performance}

\begin{figure}[htbp]
    \centering
    \begin{subfigure}[t]{0.33\linewidth}
        \centering
        \includegraphics[width=\linewidth]{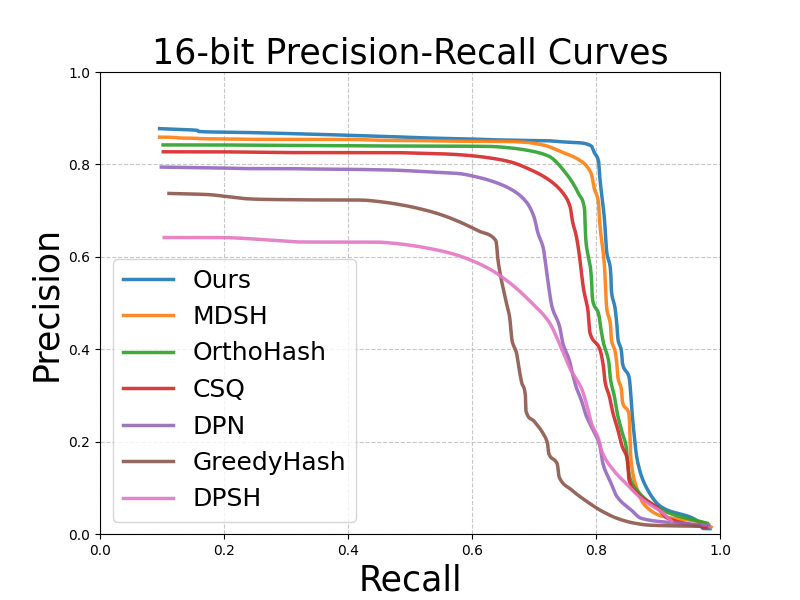}
        \caption{16 bit PR Curves}
    \end{subfigure}\hfill
    \begin{subfigure}[t]{0.33\linewidth}
        \centering
        \includegraphics[width=\linewidth]{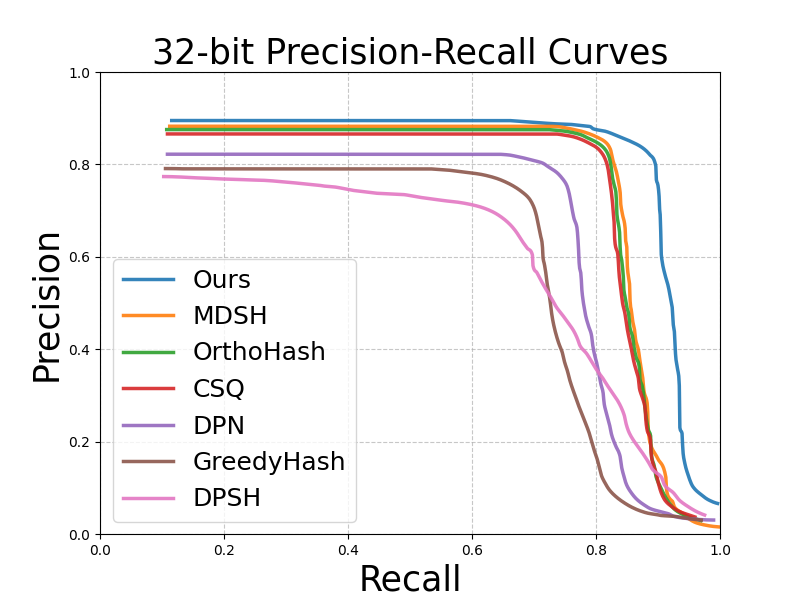}
        \caption{32 bit PR Curves}
    \end{subfigure}\hfill
    \begin{subfigure}[t]{0.33\linewidth}
        \centering
        \includegraphics[width=\linewidth]{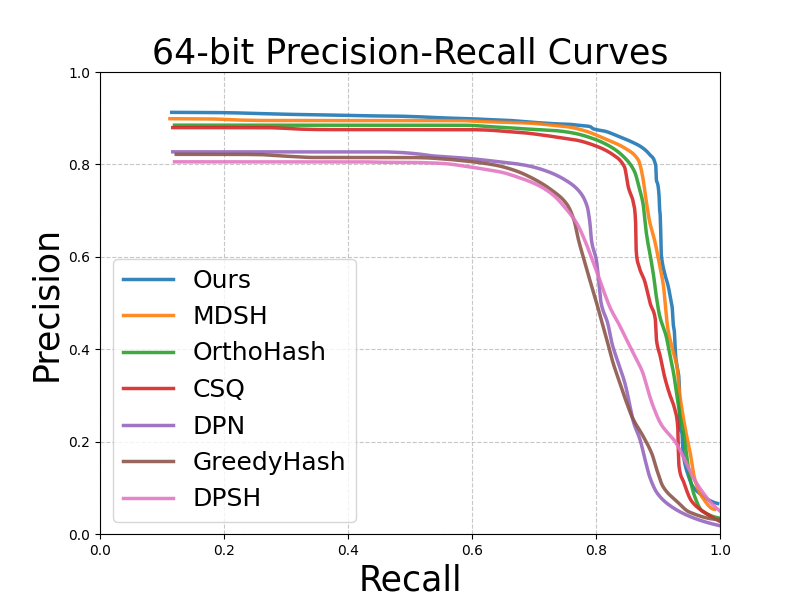}
        \caption{64 bit PR Curves}
    \end{subfigure}
    \caption{Precision-Recall curves on ImageNet across different bit configurations.}
    \label{fig:pr_curve}
\end{figure}
\vspace{-2mm}

\section{Training Setup}

\subsection{Datasets}

\textbf{ImageNet} is a large-scale image classification dataset consisting of over 1.2 million images annotated with 1,000 categories. Following the protocol in~\citep{deng2009imagenet}, we use the ILSVRC2012 version for evaluation. The validation set of 50K images is used as the query set, while the remaining training images form the database. For training, we randomly sample 130K images from the database.

\textbf{CIFAR-10} consists of 60,000 images across 10 categories, with each image sized $32\times32$. Following the standard practice in~\citep{krizhevsky2009learning}, we use 10K test images as the query set and the remaining 50K training images as the database. For training, 5K images are randomly sampled from the database.

\textbf{MSCOCO} ~\citep{lin2014microsoft} is an image recognition, segmentation, and captioning dataset. We use the public version processed by~\citep{lin2014microsoft}, where images with missing category information have been filtered out. This results in 122K labeled images by combining the training and validation splits. We randomly sample 5K images as the query set, with the remaining images forming the database, and then randomly sample 10K images from the database for training.

\textbf{Seen/Unseen Splitting}~\citep{shen2025empirical} refers to our unified protocol for constructing seen and unseen subsets across both single-label and multi-label datasets. For single-label datasets such as CIFAR-10 and ImageNet-1K, we partition the label space using dataset-specific ratios: in CIFAR-10, 80\% of the categories are designated as seen, whereas in ImageNet, 85\% of the categories are treated as seen; the remaining categories in each dataset serve as unseen. For the multi-label dataset MSCOCO, we likewise divide the label space by a category ratio (85\% seen). However, since images may contain multiple tags, an image is included in the seen or unseen subset only when all of its labels fall entirely within the corresponding class group, while images with mixed labels are used exclusively in the “all-class” database settings. Using these purified subsets, we then construct the four evaluation protocols—Seen@Seen, Seen@All, Unseen@Unseen, and Unseen@All—in a consistent manner across all datasets.

\paragraph{License.} 
ImageNet is released under a non-commercial license, and the use of the dataset is restricted to research and educational purposes. Users must apply for access and agree to the ImageNet Terms of Use. CIFAR-10 is made publicly available by the University of Toronto under the MIT License. This permits free use, modification, and distribution of the dataset for both research and commercial purposes. For MSCOCO, the annotations are provided under the Creative Commons Attribution 4.0 License (CC BY 4.0), and the use of the images must comply with the Flickr Terms of Use. The dataset is released for academic and research use.

\section{Ablation Study and Further Analysis}
\subsection{Comparison with Traditional Mutual Learning}

Traditional Deep Mutual Learning (DML) \citep{zhang2018deep,wu2019mutual,zhao2021deep,zhao2021novel} typically employs two separate branches, where each branch contains an independently initialized and trained deep neural network (DNN). While this design allows for mutual supervision between diverse learners, it also limits the potential for fine-grained interaction between the learned representations, especially in the context of hashing where compact and consistent binary codes are desired.

In contrast, our method adopts a shared-backbone design with two branches operating on the same DNN. This encourages closer interaction and more effective information sharing between the branches, thereby facilitating the generation of more consistent and semantically aligned hash codes. The underlying idea is to enforce mutual guidance without introducing significant representational discrepancies caused by separate networks.

We evaluate both settings — one with a single shared DNN (denoted as 1DNN+MoH), and one with two independent DNNs (denoted as 2DNN+MoH) — across three benchmark datasets: ImageNet, MSCOCO, and CIFAR-10. As shown in Table~\ref{tab:dml_comparison}, our shared DNN design consistently outperforms the traditional dual-DNN setup across almost all bit lengths and datasets.

\begin{table*}[htbp]
\small
\centering
\caption{Performance of 1DNN+MoH and 2DNN+MoH on ImageNet, MSCOCO, and CIFAR-10 across all bits. Best results are \textbf{bolded}.}
\begin{tabular}{l*{9}{c}}
\toprule
 & \multicolumn{3}{c}{\textbf{ImageNet}} & \multicolumn{3}{c}{\textbf{MSCOCO}} & \multicolumn{3}{c}{\textbf{CIFAR-10}} \\
\cmidrule(lr){2-4} \cmidrule(lr){5-7} \cmidrule(lr){8-10}
\textbf{Method} & \textbf{16 bits} & \textbf{32 bits} & \textbf{64 bits} & \textbf{16 bits} & \textbf{32 bits} & \textbf{64 bits} & \textbf{16 bits} & \textbf{32 bits} & \textbf{64 bits} \\
\midrule
2DNN+MoH & 0.8601 & 0.8859 & 0.8996 & 0.7374 & 0.8217 & 0.8623 & 0.9632 & 0.9650 & 0.9647 \\
\textbf{1DNN+MoH} & \textbf{0.8744} & \textbf{0.8975} & \textbf{0.9062} & \textbf{0.7903} & \textbf{0.8675} & \textbf{0.8727} & \textbf{0.9665} & \textbf{0.9657} & \textbf{0.9658} \\
\bottomrule
\label{tab:dml_comparison}
\end{tabular}
\end{table*}

\subsection{SM-MoH Module Analysis}

To better understand the efficacy of our proposed Split-Merge Mixture of Hash Experts (SM-MoH) module, we conduct ablation studies targeting three core components: the design of hashing experts, expert sharing, and the role of the softmax mechanism. Table~\ref{tab:moh_analysis} summarizes the experimental results across three datasets.

\textbf{Design of Hashing Experts vs. Traditional Experts.} Traditional Mixture of Experts (MoE) \citep{chen2023adamv,chen2022towards,riquelme2021scaling,shazeer2017outrageously} typically employs two-layer MLPs with ReLU activations as experts, designed for general-purpose representation transformation. In contrast, our MoH replaces these with specialized hashing experts, which directly map the feature dimension to the hash bit dimension — effectively acting as task-specific hashing layers.

We compare two baselines: the traditional MoE expert, which uses a two-layer MLP with hidden ReLU as commonly seen in the MoE literature, and the traditional hash expert, which consists of a single linear projection layer without non-linearity, as typically employed in hashing methods.

Our design strikes a balance: it retains the structure of two-layer MLPs but aligns their output directly to binary codes, offering greater representational power while preserving hash compatibility. As shown in the table, both traditional variants perform worse than our method, especially on MSCOCO (\emph{e.g.}, 0.8675 vs. 0.8472 for 32-bit).

\textbf{Expert Sharing Across Branches.} We adopt a shared expert design across branches in MoH to encourage consistent hashing and reduce redundancy. To verify its effectiveness, we compare with a variant where each branch has separate (unshared) experts.

Results show that unshared experts degrade performance on all datasets. For instance, on ImageNet at 32-bit, shared experts achieve 0.8975 vs. 0.8958 with unshared experts, confirming that expert sharing enhances generalization and code consistency.

\textbf{Impact of Removing Softmax Gate.} Unlike traditional MoE which utilizes softmax to weigh expert contributions, we remove softmax and instead allow parallel supervision from all experts. This simplifies optimization and encourages more diverse expert behaviors. As Table~\ref{tab:moh_analysis} shows, removing softmax leads to consistent improvements: for instance, on ImageNet (64-bit), performance rises from 0.9001 (with softmax) to 0.9062 (ours).

\begin{table*}[htbp]
\small
\centering
\caption{Performance comparison of different methods on ImageNet, MSCOCO, and CIFAR-10 across all bits. Traditional MoE experts typically employ two-layer MLPs with ReLU activations. Traditional hashing expert consists of a single linear projection layer without non-linearity, as commonly used in hashing-based methods. Unshared expert indicates that the two branches do not share experts. "Trad" in this table means traditional. Best results are \textbf{bolded}.}
\begin{tabular}{l*{9}{c}}
\toprule
 & \multicolumn{3}{c}{\textbf{ImageNet}} & \multicolumn{3}{c}{\textbf{MSCOCO}} & \multicolumn{3}{c}{\textbf{CIFAR-10}} \\
\cmidrule(lr){2-4} \cmidrule(lr){5-7} \cmidrule(lr){8-10}
\textbf{Method} & \textbf{16 bit} & \textbf{32 bit} & \textbf{64 bit} & \textbf{16 bit} & \textbf{32 bit} & \textbf{64 bit} & \textbf{16 bit} & \textbf{32 bit} & \textbf{64 bit} \\
\midrule
Trad MoE Expert & 0.8762 & 0.8862 & 0.8942 & 0.7730 & 0.8472 & 0.8605 & 0.9649 & 0.9653 & 0.9634 \\
Trad Hash Expert & 0.8663 & 0.8872 & 0.8996 & 0.7584 & 0.8533 & 0.8658 & 0.9638 & 0.9641 & 0.9649 \\
Unshared Expert & 0.8728 & 0.8958 & 0.9031 & 0.7882 & 0.8657 & 0.8701 & 0.9658 & 0.9659 & 0.9657 \\
With Softmax & 0.8679 & 0.8907 & 0.9001 & 0.7793 & 0.8526 & 0.8686 & 0.9625 & 0.9639 & 0.9647 \\
\textbf{SM-MoH} & \textbf{0.8744} & \textbf{0.8975} & \textbf{0.9062} & \textbf{0.7903} & \textbf{0.8675} & \textbf{0.8727} & \textbf{0.9665} & \textbf{0.9657} & \textbf{0.9658} \\
\bottomrule
\end{tabular}
\label{tab:moh_analysis}
\end{table*}

\subsection{SM-MoH Parameter Tuning}

We investigate the influence of two hyperparameters in the MoH module: the number of total experts (horizontal axis) and the activation ratio, i.e., the proportion of experts selected per input (vertical axis). As shown in Figure~\ref{fig:mohparameters}, each subfigure presents the model's 16-bit mAP on CIFAR-10, ImageNet, and MSCOCO, respectively.

Overall, MoH demonstrates stable performance across a wide range of settings, but appropriate tuning of these parameters can yield noticeable improvements. On ImageNet, the best performance (0.8771 mAP) is achieved when using 64 experts with a 1/4 activation ratio, indicating a balanced trade-off between diversity and sparsity. On CIFAR-10, performance remains consistently high, with the best result (0.9666 mAP) also occurring at 64 experts and a 1/4 ratio. For MSCOCO, the highest mAP (0.7903) is observed when activating 1/8 of 64 experts, suggesting that a smaller number of activated experts may be more effective.

It is also notable that overly low expert counts (e.g., 16) or excessively sparse activation (e.g., 1/8 on CIFAR-10) tend to hurt performance, likely due to insufficient model capacity or representational bottlenecks. These results suggest that MoH benefits from a moderate number of diverse experts, with partial activation to maintain efficiency and specialization.

\begin{figure*}[htbp]
    \centering
    \begin{subfigure}[t]{0.33\textwidth}
        \centering
        \includegraphics[width=\linewidth]{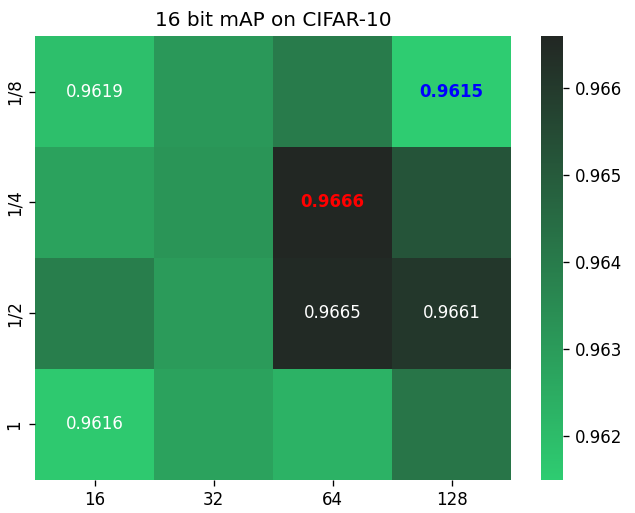}
        \caption{16bit on CIFAR-10}
    \end{subfigure}\hfill
    \begin{subfigure}[t]{0.33\textwidth}
        \centering
        \includegraphics[width=\linewidth]{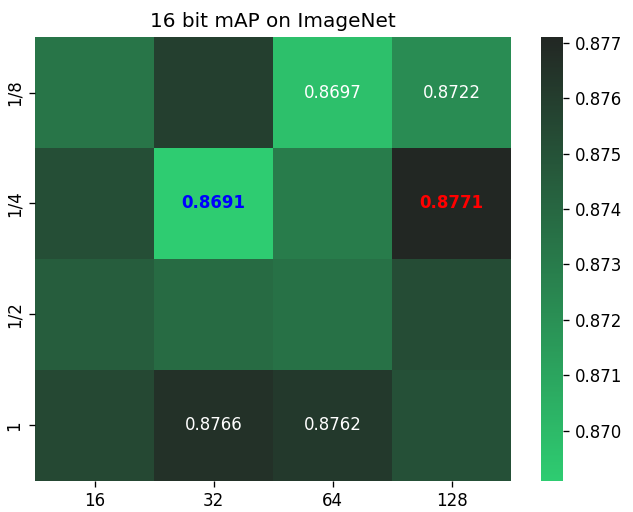}
        \caption{16bit on ImageNet}
    \end{subfigure}\hfill
    \begin{subfigure}[t]{0.33\textwidth}
        \centering
        \includegraphics[width=\linewidth]{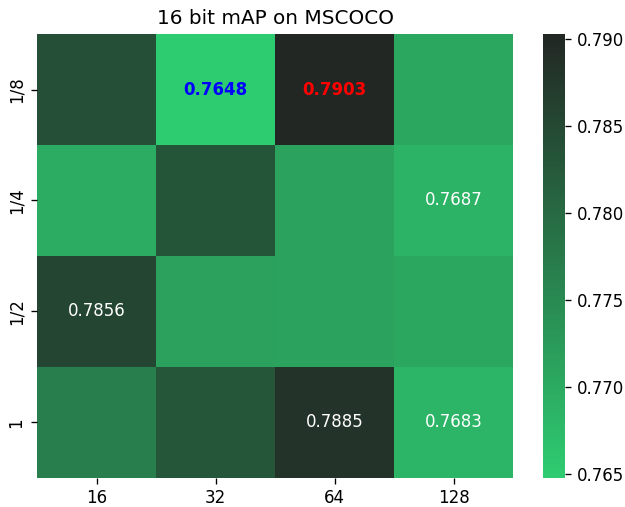}
        \caption{16bit on MSCOCO}
    \end{subfigure}
    \caption{Impact of SM-MoH parameter tuning on model performance at 16-bit code length across ImageNet, MSCOCO, and CIFAR-10 datasets. In heatmaps, darker colors indicate higher values, while lighter colors represent lower values. The maximum and minimum values are highlighted in red and blue, respectively.}
    \label{fig:mohparameters}
\end{figure*}

\subsection{Effectiveness of SM-MoH Design}

SM-MoH is not introduced to boost any individual branch in isolation. Rather, it is specifically designed to facilitate mutual learning by providing a shared expert space and adaptive routing that enhance cross-branch knowledge transfer and alignment between the pointwise and pairwise streams. 

\paragraph{Why using SM-MoH alone can lead to a performance drop.}The reason is that center-based supervision essentially pulls samples toward their class/hash centers, yielding a signal that is relatively global, stable, and low-variance. In contrast, SM-MoH introduces (i) dynamic routing of gating with top-$k$ experts, meaning that samples from the same class may be dispatched to different experts, and (ii) expert specialization, where different experts learn different mappings. When only the center loss is present, there is insufficient constraint to coordinate the outputs across experts. Accordingly, when SM-MoH is used without mutual learning, it may not provide benefits and can even underperform the plain center-based baseline (Table~\ref{tab:ablation}). 

\paragraph{Why mutual learning alone is limited.} Mutual learning without SM-MoH brings only a small gain (Table~\ref{tab:ablation}), because pointwise and pairwise supervision induce representations with different geometric biases, so the two branches often lie in partially misaligned subspaces, limiting effective cross-branch knowledge transfer.

\paragraph{Why the combination works.} When SM-MoH combined with mutual learning, the pairwise branch provides relational constraints among samples, offering a finer-grained and stronger structural signal, while the center-based branch contributes complementary global guidance as shown in Table~\ref{tab:ablation}. Mutual learning further pulls the two branches into a shared hash space, effectively acting as a cross-branch consistency regularizer for the expert routing and outputs. As a result, experts are jointly constrained by the pairwise relational structure and cross-branch alignment, enabling SM-MoH to capture diverse modes/hard cases via multiple experts without sacrificing intra-class consistency.

Therefore, we emphasize that SM-MoH is an interaction-centric component for cross-branch integration, rather than a single-branch booster.

\section{Theory Supplement}

Here we supplement discrepancy elimination theory in the main text: we derive an explicit inequality comparing the single-branch baseline risk \(R_{base}\) with the dual-branch UniHash risk \(R_{UniHash}\), provide a clean non-asymptotic sufficient condition under which \(R_{UniHash}<R_{base}\), and show how these bounds ground our seen/unseen generalization analysis.

\subsection{Risk Comparison: Baseline vs.\ UniHash}
\label{sub:risk}

\paragraph{Risk decompositions.}
Let $R(\cdot)$ be population risk and $\hat{R}(\cdot)$ be empirical risk. 
For single-branch baselines and UniHash, we have
{
\setlength{\abovedisplayskip}{6pt}
\setlength{\belowdisplayskip}{6pt}
\begin{equation}\label{eq:base-bound}
R_{base} \le \hat{R}_{base}
 + C_{stat}^{base}
 + C_{quant}^{base}
 + E_{struct},
\end{equation}
\begin{equation}\label{eq:UniHash-bound}
R_{UniHash} \le \hat{R}_{UniHash}
 + C_{stat}^{UniHash}
 + C_{quant}^{UniHash}
 + C_{cons}.
\end{equation}
}

Here, \(C_{stat}\) denotes capacity; \(C_{quant}\) is a quantization term decaying with code length; \(E_{struct}>0\) is the paradigm-specific structural error of single-branch training; and \(C_{cons}\) is the consistency term from mutual learning.

\paragraph{Capacity \& consistency scalings.}
Under the assumptions already stated in the paper (Lipschitz modules and Top-$k$ sparse routing), there exist constants such that
{%
\setlength{\abovedisplayskip}{6pt}%
\setlength{\belowdisplayskip}{6pt}%
\begin{equation}\label{eq:stat-term}
C_{\text{stat}}^{(\cdot)} = \mathcal{O}\!\Big(\sqrt{\tfrac{k\log m}{n}}\Big),
\end{equation}
\begin{equation}\label{eq:quant-term}
C_{\text{quant}}^{(\cdot)} = \exp(-\Omega(q)),
\end{equation}
\begin{equation}\label{eq:cons-term}
C_{\text{cons}} = \mathcal{O}\!\Big(\tfrac{\tau}{\sqrt{n}}\Big).
\end{equation}
}

where $n$ is sample size, $m$ the number of experts, $k$ the per-sample active experts, and 
$\tau^2=\mathbb{E}\|u^c(x)-u^p(x)\|^2$ measures cross-branch mismatch.

\paragraph{Direct inequality for superiority.}
Subtracting Eq.~\ref{eq:base-bound} from Eq.~\ref{eq:UniHash-bound} yields
{%
\footnotesize
\begin{equation}\label{eq:D4}
\textstyle
\begin{split}
R_{UniHash}\! - R_{base}\!\!
\;\le\;&
\!(\hat{R}_{UniHash}\!-\hat{R}_{base})\!
+(C_{stat}^{UniHash}\!-C_{stat}^{base})\\
&\!\!\!+(C_{quant}^{UniHash}\!-C_{quant}^{base})\!
+(C_{cons}\!-E_{struct}\!).
\end{split}
\end{equation}
}

Under the same backbone and bit settings, 
$C_{stat}^{UniHash}\!\approx C_{stat}^{base}$,
$C_{quant}^{UniHash}\!\approx C_{quant}^{base}$,
and with well-optimized training $\hat{R}_{UniHash}\!\approx\hat{R}_{base}$.
Hence
\begin{equation}
R_{UniHash} - R_{base}
\;\lesssim\;
C_{cons}-E_{struct}.
\label{eq:D5}
\end{equation}

\paragraph{Sufficient condition.}
If after training
\begin{equation}
C_{cons} \;<\; E_{struct} - \varepsilon
\quad(\varepsilon>0),
\label{eq:D6}
\end{equation}
then Eq.~\ref{eq:D5} implies $R_{\text{UniHash}} < R_{\text{base}}-\varepsilon$.
In effect, mutual learning replaces the irreducible single-branch structural error \(E_{struct}\) with a learnable consistency term \(C_{cons}\). Once \(C_{cons}<E_{struct}\), UniHash attains strictly lower population risk.

\paragraph{Asymptotics.}
When $n,q\to\infty$, Eq.~\ref{eq:stat-term}--\ref{eq:cons-term} gives $C_{stat}, C_{cons}\!\to 0$ and $C_{quant}\!\to 0$, so from Eq.~\ref{eq:UniHash-bound} we get 
$R_{UniHash}-\hat{R}_{UniHash}\!\to 0$; 
the single-branch bound  Eq.~\ref{eq:base-bound} retains the constant floor $E_{struct}>0$.
Thus, by eliminating the structural floor that single-branch models retain, UniHash is strictly better asymptotically.

\subsection{Seen/Unseen Generalization}

Let $R_{seen}(\cdot)$ and $R_{unseen}(\cdot)$ be test risks on seen and unseen categories, and 
$\Delta(\cdot)=R_{unseen}-R_{seen}$ be the gap. View seen and unseen samples as drawn from $\mathcal{D}_s,\mathcal{D}_u$ on $(x,y)\!\in\!\mathcal{X}\times\mathcal{Y}$ with a shared $\mathcal{X}$ but label sets $\mathcal{Y}_s$ (seen) and $\mathcal{Y}_u$ (unseen), where only $\mathcal{Y}_s$ is observed in training; risks are expectations under $\mathcal{D}_s$ or $\mathcal{D}_u$. For any hashing function $f$,
{\small
\begin{equation}
R_{unseen}(f)\le R_{seen}(f)+\mathrm{disc}_{\mathcal H}(\mathcal D_s,\mathcal D_u)+\lambda(f).
\label{eq:D7}
\end{equation}
}

where $\mathrm{disc}_{\mathcal{H}}$ measures domain divergence w.r.t.\ hypothesis class $\mathcal{H}$,
and $\lambda(f)$ captures how well the embedding’s global anchors and relative layout transfer to unseen classes.
Thus the gap $\Delta(f)$ is governed by (a) hypothesis complexity and (b) embedding alignment.

\paragraph{How UniHash reduces $\Delta$.}
Mutual learning reduces the right-hand side of Eq.~\ref{eq:D7} along two coupled mechanisms. First, the cross-branch consistency term $C_{cons}=\mathcal{O}(\tau/\sqrt{n})$ enforces $u^c\!\approx u^p$, effectively contracting the admissible hypothesis class and thereby diminishing $\mathrm{disc}_{\mathcal H}(\mathcal D_s,\mathcal D_u)$, since smaller classes are less sensitive to distribution shift. Second, training both branches in a single hash space with shared experts jointly supplies global anchors (pointwise signal) and relational geometry (pairwise signal), which improves semantic alignment on novel categories and lowers $\lambda(f)$; unseen samples are thus mapped onto the structured manifold spanned by seen semantics rather than drifting into uncalibrated regions.

\paragraph{Quantified gap reduction.}
Let $f_{\text{base}}$ be a single-branch optimum and $f_{\text{UniHash}}$ the UniHash solution. Combining Sec.~\ref{sub:risk} with the two effects above yields
{\footnotesize
\begin{equation}
\begin{aligned}
\Delta(f_{\mathrm{UniHash}})-\Delta(f_{\mathrm{base}})
\;\lesssim\;&
C_{\mathrm{cons}}-E_{\mathrm{struct}}
+\mathrm{disc}_{\mathcal H_{\mathrm{UniHash}}}-
\\
&\quad
\!\!\!\!\mathrm{disc}_{\mathcal H_{\mathrm{base}}}
+\lambda(f_{\mathrm{UniHash}})-\lambda(f_{\mathrm{base}}).
\label{eq:D8}
\end{aligned}
\end{equation}
}

If Eq.~\ref{eq:D6} holds and shared experts indeed align the embedding, the RHS is strictly negative:
\emph{$\Delta(f_{\text{UniHash}})<\Delta(f_{\text{base}})$}.
UniHash improves seen$\to$unseen generalization by jointly shrinking the hypothesis-class sensitivity to distribution shift and enhancing semantic alignment. Concretely, the gap reduction in Eq.~\ref{eq:D8} follows from (i) consistency-driven capacity contraction, which lowers $\mathrm{disc}_{\mathcal H}$, and (ii) shared-expert alignment, which reduces $\lambda(f)$. A simple non-asymptotic check (\emph{e.g.}, $C_{cons}<E_{struct}$ on validation) already guarantees a smaller unseen risk relative to single-branch baselines. As code length increases, quantization errors decay and further facilitate these effects, and asymptotically UniHash removes the structural floor retained by single-branch training, yielding strictly better population risk on unseen classes.

\section{Computational and Space Complexity}

We analyze UniHash from both computational and space complexity perspectives. As shown below, inference scales linearly with the batch size and relies on compact binary codes, so the retrieval pipeline remains efficient in practice; in turn, this efficiency offsets the extra complexity introduced by our novel structure, making UniHash suitable for large-scale retrieval.

\subsection{Computational Complexity}
The overall computational complexity is $\mathcal{O}\!\left(N\left(d^{2}+kdq\right)\right)$, where $N$ denotes the batch size, $d$ denotes the output dimensionality of the backbone, $q$ is the length of the binary hash code, and $k$ is the number of experts selected per sample.

A detailed analysis is presented below. Each gating network consists of two MLP layers, resulting in a computational complexity of $\mathcal{O}(d^{2})$, where $d$ denotes the output dimensionality of the backbone. Similarly, each hash expert is composed of two MLP layers, leading to a complexity of $\mathcal{O}(dq)$, where $q$ is the length of the binary hash code. For each sample, the complexity is $\mathcal{O}(d^{2}+kdq)$, where $k$ is the number of experts selected per sample. Therefore, the overall computational complexity is $\mathcal{O}\!\left(N\left(d^{2}+kdq\right)\right)$, where $N$ denotes the batch size.

\subsection{Space Complexity}
The overall space complexity is $\mathcal{O}\!\left(d+m(d+q)\right)$, where $d$ denotes the output dimensionality of the backbone, $q$ is the length of the binary hash code, and $m$ is the total number of experts.

A detailed analysis is presented below. Each gating network consists of two MLP layers, resulting in a space complexity of $\mathcal{O}(d)$, where $d$ denotes the output dimensionality of the backbone. Similarly, each hash expert is composed of two MLP layers, leading to a complexity of $\mathcal{O}(d+q)$, where $q$ is the length of the binary hash code. Since there are $m$ experts, the overall space complexity is $\mathcal{O}\!\left(d+m(d+q)\right)$.

\section{Limitations}

\paragraph{Is this work merely an incremental combination of existing ingredients, rather than a genuinely new contribution?} Our key clarification is that the paper is not optimizing a single paradigm with add-ons, but explicitly targets the structural tension between seen-category accuracy and unseen-category generalization, and proposes a unified dual-branch training formulation to resolve it. In this view, the dual-branch design and the alignment objective are not optional embellishments: they operationalize the complementary failure modes of pointwise vs. pairwise supervision under the same retrieval goal, which is different from prior works that stay within one paradigm and only refine losses or architectures.

\paragraph{Does SM-MoH really contribute on its own, does it only work when paired with mutual learning?}
A strong concern is that SM-MoH alone can degrade performance (e.g., the center-based branch drops notably when SM-MoH is inserted without mutual learning), which could be interpreted as lack of standalone utility or robustness. The paper’s intended claim, however, is that SM-MoH is not designed as a universally beneficial plug-in; it is a structural mechanism to enable cross-paradigm interaction—routing and expert specialization create diversity, while the mutual-learning alignment provides the missing constraint that prevents intra-class dispersion across experts. Therefore, the relevant evaluation is the coupled system: SM-MoH + mutual learning, where ablations show the drop disappears and performance peaks, supporting that the module’s benefit is conditional but purposeful.

\paragraph{Is the practical efficiency worth the added complexity?}
Our design is simplifiable at inference: retrieval only relies on the binary-code generation path, and we can follow the paper’s select-the-better-branch strategy to keep only a single branch for encoding both queries and the database, making inference cost comparable to standard single-branch methods. Moreover, SM-MoH activates only a sparse set of experts via top-k routing, which keeps its marginal compute controlled. The complexity analysis also indicates that most additional overhead comes from the dual-branch interaction during training, which is an offline one-time cost and does not affect online retrieval throughput. Therefore, UniHash remains suitable for large-scale retrieval.

\end{document}